\newcolumntype{L}{>{\raggedright\arraybackslash}X}
\newenvironment{theorem*}[1][]{%
    \par\medskip\noindent      
    \textbf{Theorem}           
    \if\relax\detokenize{#1}\relax\else\space(#1)\fi 
    \textbf{.} \itshape        
}{%
    \par\medskip               
}
\begin{document}

\title{Hard-Constrained Neural Networks with Physics-Embedded Architecture for Residual Dynamics Learning and Invariant Enforcement in Cyber-Physical Systems}

\author{\name Enzo Nicolás Spotorno \email enzoniko@lisha.ufsc.br \\
       \addr Department of Informatics and Statistics\\
       Federal University of Santa Catarina\\
       Santa Catarina, 88040-900, Brazil
       \AND
       \name Josafat Leal Filho \email josafat@lisha.ufsc.br \\
       \addr Department of Informatics and Statistics\\
       Federal University of Santa Catarina\\
       Santa Catarina, 88040-900, Brazil
       \AND
       \name Antônio Augusto Fröhlich \email guto@lisha.ufsc.br \\
       \addr Department of Informatics and Statistics\\
       Federal University of Santa Catarina\\
       Santa Catarina, 88040-900, Brazil}

\editor{TO FILL}

\maketitle

\begin{abstract}
This paper presents a framework for physics-informed learning in complex cyber-physical systems governed by differential equations with both unknown dynamics and algebraic invariants. First, we formalize the Hybrid Recurrent Physics-Informed Neural Network (HRPINN), a general-purpose architecture that embeds known physics as a hard structural constraint within a recurrent integrator to learn only residual dynamics. Second, we introduce the Projected HRPINN (PHRPINN), a novel extension that integrates a predict–project mechanism to strictly enforce algebraic invariants by design. The framework is supported by a theoretical analysis of its representational capacity. We validate HRPINN on a real-world battery prognostics DAE and evaluate PHRPINN on a suite of standard constrained benchmarks. The results demonstrate the framework's potential for achieving high accuracy and data efficiency, while also highlighting critical trade-offs between physical consistency, computational cost, and numerical stability, providing practical guidance for its deployment.
\end{abstract}

\begin{keywords}
  Cyber-Physical Systems, Physics-Informed Learning, Recurrent Neural Networks, Differential-Algebraic Equations, Prognostics, Constraint Projection, Hybrid Modeling
\end{keywords}

\section{Introduction}
\label{sec:introduction}

The increasing complexity and autonomy of Cyber-Physical Systems (CPS) present a fundamental modeling challenge: how to create simulation models that are both highly accurate and physically consistent. These models must capture the behavior of systems governed by a mix of well-understood physical laws and complex, unmodeled dynamics such as friction, wear, and thermal effects. This dual requirement is critical in safety-conscious applications like autonomous vehicles and industrial robotics, where predictive accuracy alone is insufficient \citep{van2022executable}. Models must also be \textbf{physically consistent} (respecting fundamental laws to ensure their outputs are meaningful and safe for decision-making) and \textbf{data-efficient}, learning robustly from the limited and often expensive data available from real-world operations \citep{van2022executable}. A key application driving this need is the development of high-fidelity \textbf{digital twins}, which are becoming critical for real-time prediction, control, and safety assessment \citep{eckhart2019digital, mihai2022digital}. 

We tackle this challenge for a specific class of physical systems: those whose behavior can be described by known ordinary differential equations (ODEs) augmented by unknown residual dynamics and, when present, algebraic invariants.

This dual requirement creates a modeling tension. Traditional mechanistic (first-principles) models for such systems guarantee physical consistency but are often incomplete \citep{karniadakis2021physics, raissi2019physics}: they may omit frictional effects, wear and degradation, thermal couplings, or other parasitic dynamics that materially affect behavior. Conversely, purely data-driven models can capture these nuanced residual dynamics \citep{karniadakis2021physics, raissi2019physics} but typically provide no built-in guarantee of physical plausibility, which can lead to long-horizon violations of basic principles (e.g., conservation laws or algebraic invariants) and unsafe extrapolations. Physics-Informed Machine Learning (PIML) seeks to bridge this gap by incorporating prior physical knowledge into learning systems, improving sample efficiency and interpretability while constraining the hypothesis class \citep{karniadakis2021physics} (the set of all possible functions the model is allowed to learn—forcing it to only consider solutions that are physically plausible).

A commonly used pragmatic strategy within PIML is grey-box (residual) learning: explicitly encode trusted, well-understood physics and learn only the unknown residual dynamics. Doing so prevents the learning component from wasting capacity trying to re-learn established physical laws that were built and refined over many years; instead, the network focuses representational power on modeling discrepancies that are difficult to derive analytically \citep{somers2023digital}. This residual/grey-box approach underlies several recent methods, including Universal/Neural ODE variants (UDEs/NODEs) that augment known dynamics with learned terms. While efficient at modeling unknown dynamics, these hybrid residual models do not, by construction, enforce algebraic invariants and can struggle in stiff regimes \citep{chen2018neural, rackauckas2020universal, legaard2023constructing}.

While the residual learning approach is promising, enforcing physical consistency remains a central challenge in PIML, with existing strategies forcing difficult trade-offs. \textbf{Soft-constrained} methods, like Physics-Informed Neural Networks (PINNs), incorporate physics via loss penalties but offer no strict guarantees at inference time. Conversely, \textbf{hard-constraint} approaches either impose strong architectural priors (e.g., Hamiltonian Neural Networks \citep{greydanus2019hamiltonian}) that limit their applicability to non-ideal, real-world systems, or use projection steps that are typically paired with inefficient, black-box dynamics models (e.g. \citep{white2023stabilized, pal2025semi}). Taken together, these observations reveal a gap: no single existing approach simultaneously (i) embeds known ODE structure as a hard architectural constraint, (ii) learns only unknown residual dynamics to preserve representational efficiency, and (iii) enforces algebraic invariants exactly within an end-to-end trainable recurrent framework.

This paper proposes a unified framework designed to close that gap by formalizing  Hybrid Recurrent Physics-Informed Neural Networks (HRPINNs) and introducing their projected extension, PHRPINN. The core architecture hard-codes known governing equations inside a recurrent integrator cell to learn only residual dynamics. For systems with algebraic invariants, the predict–project mechanism in PHRPINN strictly enforces these constraints by design. Our main contributions are threefold:

\begin{enumerate}
    \item We \textbf{formalize} the HRPINN architecture as a general-purpose method for residual dynamics learning in time-series and demonstrate its effectiveness for partially observed systems.
    \item We \textbf{introduce} PHRPINN, a novel predict–project extension that strictly enforces algebraic invariants, and we analyze the trade-offs between its fast and robust projection variants.
    \item We provide both \textbf{theoretical and empirical evidence} for the framework's benefits, including a proof of representational equivalence (Theorem~\ref{thm:equivalence}) and experimental results demonstrating improved data efficiency, physical consistency, and optimization stability.
\end{enumerate}

In support of reproducibility, we release our implementation, hyperparameter tables, and ablation scripts. This work focuses on index-1 constrained systems, with extensions to higher-index DAEs and more exhaustive benchmarking left to future work.

The remainder of this paper is organized as follows. Section~\ref{sec:related_work} situates our work within the existing literature on physics-informed machine learning. Section~\ref{sec:background} establishes the mathematical formulation for the dynamical systems we address. Our proposed HRPINN and PHRPINN architectures are detailed in Section~\ref{sec:method_overview}, including their theoretical underpinnings. Section~\ref{sec:casestudies} describes the two case studies used for validation: a real-world battery prognostics task and a suite of standard constrained benchmark systems. The experimental setup and results for these case studies are presented in Section~\ref{sec:experiments_and_results}. We discuss the practical implications and deployment considerations of our findings in Section~\ref{sec:discussion}. Finally, we outline the limitations and avenues for future research in Section~\ref{sec:limitations} and conclude the paper in Section~\ref{sec:conclusion}.

\section{Related Work}
\label{sec:related_work}

We position HRPINN and PHRPINN relative to established paradigms in physics-informed machine learning (PIML). The choice of a PIML methodology represents a critical trade-off between model expressiveness, physical rigidity, computational cost, and the rigor of constraint satisfaction \citep{karniadakis2021physics}. This section deconstructs this landscape to build a clear argument for a framework that synthesizes the strengths of prior approaches (namely data efficiency and hard-constraint enforcement) while mitigating their core limitations. Table~\ref{tab:taxonomy} provides a high-level summary of the methods analyzed.

\begin{table}[ht!]
\centering
\caption{A Comparative Taxonomy of PIML Method Families.}
\label{tab:taxonomy}
\renewcommand{\arraystretch}{1.25}
\small
\begin{tabular}{@{}p{4.2cm}p{9.8cm}@{}}
\toprule
\multicolumn{2}{l}{\textbf{Soft PINNs}}\\
\midrule
\textbf{Enforcement} & Training (soft penalty) \\
\textbf{What is Learned} & Full dynamics ($f_{\text{true}}$) \\
\textbf{Inference Guarantee} & None \\
\textbf{For Dissipative Sys.} & High \\
\textbf{Primary Advantage} & Flexibility \\
\textbf{Primary Limitation} & Gradient pathologies \\
\midrule
\multicolumn{2}{l}{\textbf{HNNs / LNNs}}\\
\midrule
\textbf{Enforcement} & Architectural \\
\textbf{What is Learned} & Scalar potential ($H, L$) \\
\textbf{Inference Guarantee} & Conserves learned $H, L$ \\
\textbf{For Dissipative Sys.} & Low (needs extensions) \\
\textbf{Primary Advantage} & Long-term conservation \\
\textbf{Primary Limitation} & Inflexible, poor for non-ideal systems \\
\midrule
\multicolumn{2}{l}{\textbf{PNODEs (Black-Box)}}\\
\midrule
\textbf{Enforcement} & Inference (projection) \\
\textbf{What is Learned} & Full dynamics ($f_{\text{true}}$) \\
\textbf{Inference Guarantee} & State is on manifold \\
\textbf{For Dissipative Sys.} & High \\
\textbf{Primary Advantage} & Hard constraint enforcement \\
\textbf{Primary Limitation} & High sample complexity \\
\midrule
\multicolumn{2}{l}{\textbf{ALM / OptLayer}}\\
\midrule
\textbf{Enforcement} & Training (semi-hard) \\
\textbf{What is Learned} & Full dynamics ($f_{\text{true}}$) \\
\textbf{Inference Guarantee} & None \\
\textbf{For Dissipative Sys.} & High \\
\textbf{Primary Advantage} & Better than simple penalty \\
\textbf{Primary Limitation} & Training-time only \\
\midrule
\multicolumn{2}{l}{\textbf{HRPINN / PHRPINN (Ours)}}\\
\midrule
\textbf{Enforcement} & Architectural \& projection \\
\textbf{What is Learned} & Residual dynamics ($f_{\text{unk}}$) \\
\textbf{Inference Guarantee} & Known physics enforced; state on manifold (PHRPINN) \\
\textbf{For Dissipative Sys.} & High \\
\textbf{Primary Advantage} & Data efficiency \& guarantees \\
\textbf{Primary Limitation} & Needs known model part \\
\bottomrule
\end{tabular}
\end{table}

\subsection{Open Challenges in Soft Constraints: Why Penalties Are Not Enough}
The most common PIML strategy is the Physics-Informed Neural Network (PINN), which adds governing equations as soft penalty terms to the loss function \citep{raissi2019physics}. While flexible, this reliance on regularization presents two fundamental flaws for safety-critical systems. First, it creates significant optimization challenges, as the delicate balancing of competing loss terms often leads to "gradient flow pathologies" that stall training or require complex, problem-specific tuning \citep{wang2021understanding, karniadakis2021physics} (e.g., gradient pulling the network towards a noisy data point may be directly cancelled out by the physics gradient demanding a smooth solution, causing the optimizer to stall). Second, and more critically, soft constraints provide \textbf{no algorithmic guarantee of physical consistency} at inference time. The model may learn to satisfy the physics on average during training, but it remains susceptible to long-horizon drift where small, compounding errors lead to physically implausible or unsafe predictions \citep{pal2025semi}. This inherent unreliability of penalty-based methods directly motivates the need for \textit{hard-constrained} architectures.

\subsection{The Dilemma of Hard Constraints: Rigidity vs. Inefficiency}
In response to the failings of soft penalties, hard-constraint methods guarantee compliance by design. However, they introduce a new trade-off between architectural rigidity and data inefficiency.

\textbf{Structure-Preserving Architectures like HNNs/LNNs} offer one solution by embedding conservation laws directly into the model's structure \citep{greydanus2019hamiltonian, cranmer2020lagrangian}. By learning a scalar potential (a Hamiltonian or Lagrangian), these models guarantee the conservation of the learned quantity, leading to excellent long-term stability. However, this architectural purity is also their primary weakness: they are ill-suited for the vast majority of real-world engineered systems, which are rarely conservative. They cannot inherently model dissipative effects like friction and struggle to enforce general algebraic constraints that are not derivable from a conservation law \citep{gruver2022deconstructing}. This rigidity limits their applicability and highlights the need for a more flexible mechanism for enforcing arbitrary invariants. 

\textbf{Predict–Project Methods like PNODEs} provide that flexibility \citep{pal2025semi}. By decoupling the dynamics learning from constraint satisfaction, they can enforce any valid algebraic invariant by projecting the predicted state back onto the constraint manifold at each step. While this guarantees physical consistency, existing methods typically pair projection with a \textbf{black-box neural network} that must learn the \textit{entire} dynamics from data. This approach is data-inefficient, forcing the model to waste representational capacity re-learning well-understood physical laws. It also harms interpretability. This specific limitation (the inefficiency of black-box learning) motivates a central tenet of our work: combining the guaranteed enforcement of projection with the data efficiency of a \textit{grey-box, residual learning} approach.

\subsection{The "Semi-Hard" Compromise: Training-Time Guarantees Are Not Enough}
A third category of methods attempts to find a middle ground by enforcing constraints rigorously, but only during the \textit{training loop}. Techniques using Augmented Lagrangian Methods (ALM) or differentiable optimization layers (e.g., OptNet \citep{amos2017optnet}) solve a constrained subproblem at each training step, often leading to better convergence than simple penalties \citep{white2023stabilized}.

However, these methods share a critical vulnerability: constraint satisfaction is a \textit{learned property}, not an \textit{algorithmic guarantee at inference time}. The computationally expensive solver used during training is typically absent during deployment. If the model has not perfectly learned to stay on the manifold, it can still violate constraints at inference time, re-introducing the risk of long-horizon drift. This distinction underscores the importance of our framework's \textit{inference-time} projection, which ensures physical consistency regardless of the learned network's output.

\subsection{Synthesizing a Solution: Our Contributions}
This review shows a clear and unmet need for a \textbf{unified, general-purpose framework} that synthesizes the strengths of these disparate approaches. The core concept of grey-box residual learning has proven effective in application-specific contexts \citep{Nascimento2023-nk, A-46, Arias_Chao2022-rj}, but a formal, generalizable framework is lacking. Our work is designed to fill this gap by integrating three critical properties into a single architecture:
\begin{enumerate}
    \item \textbf{Hard-coded known physics} to avoid the optimization pathologies and lack of guarantees of soft-penalty PINNs.
    \item \textbf{Residual-only learning} to achieve the data efficiency and interpretability that black-box models like PNODEs lack.
    \item \textbf{Principled, inference-time projection} to strictly enforce general algebraic invariants, a guarantee that semi-hard, training-time methods cannot provide.
\end{enumerate}
By formalizing the HRPINN and PHRPINN architectures, we provide a robust, data-efficient, and physically consistent solution for modeling complex dynamical systems.

\section{Background and Problem Formulation}
\label{sec:background}
This section establishes the formal mathematical context for our work. We consider dynamical systems whose state $\mathbf{x}(t) \in \mathbb{R}^n$ evolves according to a set of ODEs that can be separated into a known and an unknown component:

\begin{equation} \label{eq:ode_formulation}
\dot{\mathbf{x}}(t) = \mathbf{f}_{\mathrm{phys}}(\mathbf{x}(t), \mathbf{w}(t); \mathbf{p}) + \mathbf{f}_{\mathrm{unk}}(\mathbf{x}(t), \mathbf{w}(t)), \quad \mathbf{x}(0)=\mathbf{x}_0.
\end{equation}

Here and below we use $n,d,q$ to denote vector dimensions: $\mathbf{x}(t)\in\mathbb{R}^n$ (state of $n$ variables), $\mathbf{w}(t)\in\mathbb{R}^d$ (external inputs of dimension $d$), and $\mathbf{p}\in\mathbb{R}^q$ (time-invariant physical parameters; note that any known, time-varying quantities are treated as external inputs in $\mathbf{w}(t)$). For background on ODE/DAE formulations and index definitions we refer the reader to standard references (e.g., \citep{brenan1995numerical}). 

Here, $\mathbf{f}_{\mathrm{phys}}(\cdot)$ represents the known physics, often derived from first principles, which may depend on external inputs $\mathbf{w}(t) \in \mathbb{R}^d$ and time-invariant physical parameters $\mathbf{p} \in \mathbb{R}^q$.  The term $\mathbf{f}_{\mathrm{unk}}(\cdot)$ represents the unknown or unmodeled residual dynamics, which we approximate with a neural network $\hat{\mathbf{f}}_{\boldsymbol{\theta}}(\cdot)$ parameterized by $\boldsymbol{\theta}$. In many physical systems, the dynamics are also subject to a set of $m$ algebraic invariants that must hold for all time, defining a constraint manifold $\mathcal{M} = \{\mathbf{x} \in \mathbb{R}^n \mid \mathbf{g}(\mathbf{x}) = \mathbf{0}\}$: 

\begin{equation}\label{eq:dae_constraint}
\mathbf{g}(\mathbf{x}(t)) = \mathbf{0}.
\end{equation}

We use this specific form, defining invariants on the differential state alone, as it represents the class of implicit constraints our PHRPINN architecture is designed to enforce via projection. We distinguish this from general explicit constraints (which may depend on inputs $\mathbf{w}$ or separate algebraic variables $\mathbf{z}$) later in this section.

A system described by both \eqref{eq:ode_formulation} and \eqref{eq:dae_constraint} is known as a semi-explicit Differential–Algebraic Equation (DAE) system; see, e.g., \citep{brenan1995numerical} for canonical definitions and index theory.

Examples of index-1 DAEs include electrical circuit models expressed via Modified Nodal Analysis (where the algebraic constraints are Kirchhoff laws) and simple constrained mechanical systems such as a pendulum written in Cartesian coordinates (the fixed-length constraint yields an index-1 DAE after appropriate formulation).

This paper focuses on \textbf{index-1 DAEs}. A DAE is index-1 if the constraint Jacobian, $G(\mathbf{x}) = \partial \mathbf{g} / \partial \mathbf{x}$, has full row rank for all valid states $\mathbf{x} \in \mathcal{M}$. Informally, full row rank means that the $m$ algebraic constraint equations are locally independent (none is a redundant combination of the others), which guarantees a unique local solution for the algebraic correction during projection. This is a critical requirement for the predict--project methods that inspire our work (e.g., PNODEs \citep{pal2025semi}) and for our PHRPINN architecture. The projection step, which enforces the constraint, requires finding a unique correction to the state, a problem whose solvability depends directly on the non-singularity of the matrix system involving $G(\mathbf{x})$. If the DAE were higher-index (index-2 or more), $G(\mathbf{x})$ would be rank-deficient, the projection would be ill-posed, and a unique solution would not be guaranteed \citep{brenan1995numerical}. While many higher-index DAEs in engineering can be converted to index-1 form via symbolic differentiation (a process known as index reduction), this can increase model complexity and amplify noise \citep{brenan1995numerical}. We therefore assume the systems under study are either naturally index-1 or have been pre-processed accordingly.

\paragraph{Explicit vs.\ implicit algebraic constraints.}
We distinguish two practical forms of algebraic constraints. \emph{Explicit} constraint systems take the form $\mathbf{g}(\mathbf{x},\mathbf{z},\mathbf{w})=\mathbf{0}$, where $\mathbf{z}(t)$ denotes algebraic (internal) variables whose values are instantaneously determined by the constraints given the differential states, while $\mathbf{w}(t)$ denotes externally specified inputs (control signals or measured forcings). Such explicit constraints are typically eliminated (solved for $\mathbf{z}$) inside the integrator step. In contrast, \emph{implicit} constraints are invariants on the differential state alone, $\mathbf{g}(\mathbf{x})=\mathbf{0}$, which restrict the trajectory to a manifold $\mathcal{M}$ and typically require a projection step to enforce during time integration.

\subsection{Core Assumptions}
Our theoretical results and algorithmic design rely on the following standard assumptions:
\begin{itemize}
    \item[\textbf{A1}] \textbf{(Problem Class)} The system is an ODE or a semi-explicit index-1 DAE on a compact domain $D\subset\mathbb{R}^n$.
    \newline \textit{Why:} This formally defines the scope of problems our framework is designed to solve.

    \item[\textbf{A2}] \textbf{(Regularity)} The dynamics functions $\mathbf{f}_{\mathrm{phys}}$ and $\mathbf{f}_{\mathrm{unk}}$ are continuously differentiable ($C^1$), and the constraint function $\mathbf{g}$ is twice continuously differentiable ($C^2$).
    \newline \textit{Why:} Smoothness is required to guarantee the existence and uniqueness of solutions and to ensure that the Jacobians needed for backpropagation and the projection step are well-defined.

    \item[\textbf{A3}] \textbf{(Lipschitz Continuity)} The dynamics function $\mathbf{f} = \mathbf{f}_{\mathrm{phys}} + \mathbf{f}_{\mathrm{unk}}$ is Lipschitz continuous on the domain $D$.
    \newline \textit{Why:} This is a fundamental requirement for the well-posedness of the initial value problem and is crucial for proving the stability of the numerical integrators used in our architecture and its theoretical analysis.

    \item[\textbf{A4}] \textbf{(Approximation Family)} The neural network class for $\hat{\mathbf{f}}_{\boldsymbol{\theta}}$ is a universal approximator on compact sets.
    \newline \textit{Why:} This ensures our network is sufficiently expressive to learn any continuous residual dynamics $\mathbf{f}_{\mathrm{unk}}$ to an arbitrary degree of accuracy.

    \item[\textbf{A5}] \textbf{(Integrator Order)} The chosen numerical integrator $\Phi_{\Delta t}$ has a local truncation error of at least order one.
    \newline \textit{Why:} This assumption is necessary to formally bound the global error of our recurrent architecture, as the total error accumulates from the local error at each integration step.

    \item[\textbf{A6}] \textbf{(Constraint Qualification)} The constraint Jacobian $G(\mathbf{x})$ has full row rank along the trajectories of interest. This is known as the Linear Independence Constraint Qualification (LICQ).
    \newline \textit{Why:} This is the most critical assumption for the projection part of our framework. LICQ guarantees that the Karush-Kuhn-Tucker (KKT) system solved during projection has a unique solution, ensuring the projection operator is locally well-defined and differentiable, which is essential for end-to-end training. This is the concrete reason our method is restricted to index-1 systems.

    \item[\textbf{A6'}] \textbf{(Local non-degeneracy / SOSC).} In addition to LICQ (Assumption A6), we assume the following second-order non-degeneracy (second-order sufficient condition, SOSC)\footnote{See, e.g., \textit{Numerical Optimization} by Nocedal \& Wright \citep{nocedal2006numerical} for a standard SOSC statement in equality-constraint NLPs.} holds at the orthogonal projection $x^\star$ of $\tilde x$ onto the constraint manifold $\mathcal{M}=\{x:\; g(x)=0\}$: the Lagrangian Hessian restricted to the tangent space is positive definite at $(x^\star,\lambda^\star)$, i.e.
    \[
    \forall v\in\ker G(x^\star),\qquad v^\top\Big[I + \sum_{i}\lambda_i^\star\nabla^2 g_i(x^\star)\Big] v > 0,
    \]
    where $\ker G(x^\star)$ denotes the kernel (or null space) of the constraint Jacobian (i.e., the tangent space), $\nabla g(x)$ is the gradient (forming the Jacobian $G(x)$), $\nabla^2 g_i$ is the Hessian matrix (matrix of second derivatives) of the $i$-th constraint, and $\lambda^\star$ denotes the Lagrange multiplier vector. Equivalently, the bordered KKT Jacobian in (21) (Appendix \ref{app:Bprojection_derivation}) is nonsingular at $(x^\star,\lambda^\star)$. \newline \textit{Why:} While LICQ (A6) guarantees the KKT Jacobian is invertible locally, this second-order condition strengthens that by ensuring the projection problem itself is well-posed and locally stable, which is crucial for the convergence of solvers used in the projection step and for the theoretical validity of differentiating through it.

\end{itemize}

\paragraph{Scope and Practical Considerations}
It is important to acknowledge that these assumptions, particularly the restriction to index-1 DAEs and the requirement of LICQ, can be restrictive for some real-world applications. Physical systems can exhibit discontinuities, higher-index dynamics, or operate near singular configurations where constraints become degenerate. While a full treatment of such cases is outside our current scope, our framework can be adapted. For instance, if LICQ is violated, the projection step becomes ill-posed. In such scenarios, practical fallbacks include relaxing the hard projection to a soft penalty or employing an Augmented Lagrangian Method (ALM) during training to regularize the problem while still encouraging constraint satisfaction \citep{white2023stabilized}.

\paragraph{Remark.}
For the orthogonal projection objective $ \tfrac12\|x-\tilde x\|^2$, the identity term contributes
a positive definite component and A6$'$ is typically satisfied for sufficiently small perturbations $\tilde x$ around the manifold. We include A6$'$ explicitly to ensure applicability of the implicit
function theorem and to clarify the non-degeneracy required for the projection Jacobian to exist.

\section{Method Overview: HRPINN and PHRPINN}
\label{sec:method_overview}

This section details the architectural and mathematical foundations of our proposed HRPINN and PHRPINN frameworks, connecting their design to the gaps identified in the literature.
To provide a high-level visual comparison of our proposed architectures against the baselines discussed in Section II, Figure~\ref{fig:architectures} illustrates the dataflow of each model.

\begin{figure*}
    \centering
    \includegraphics[width=1.0\linewidth]{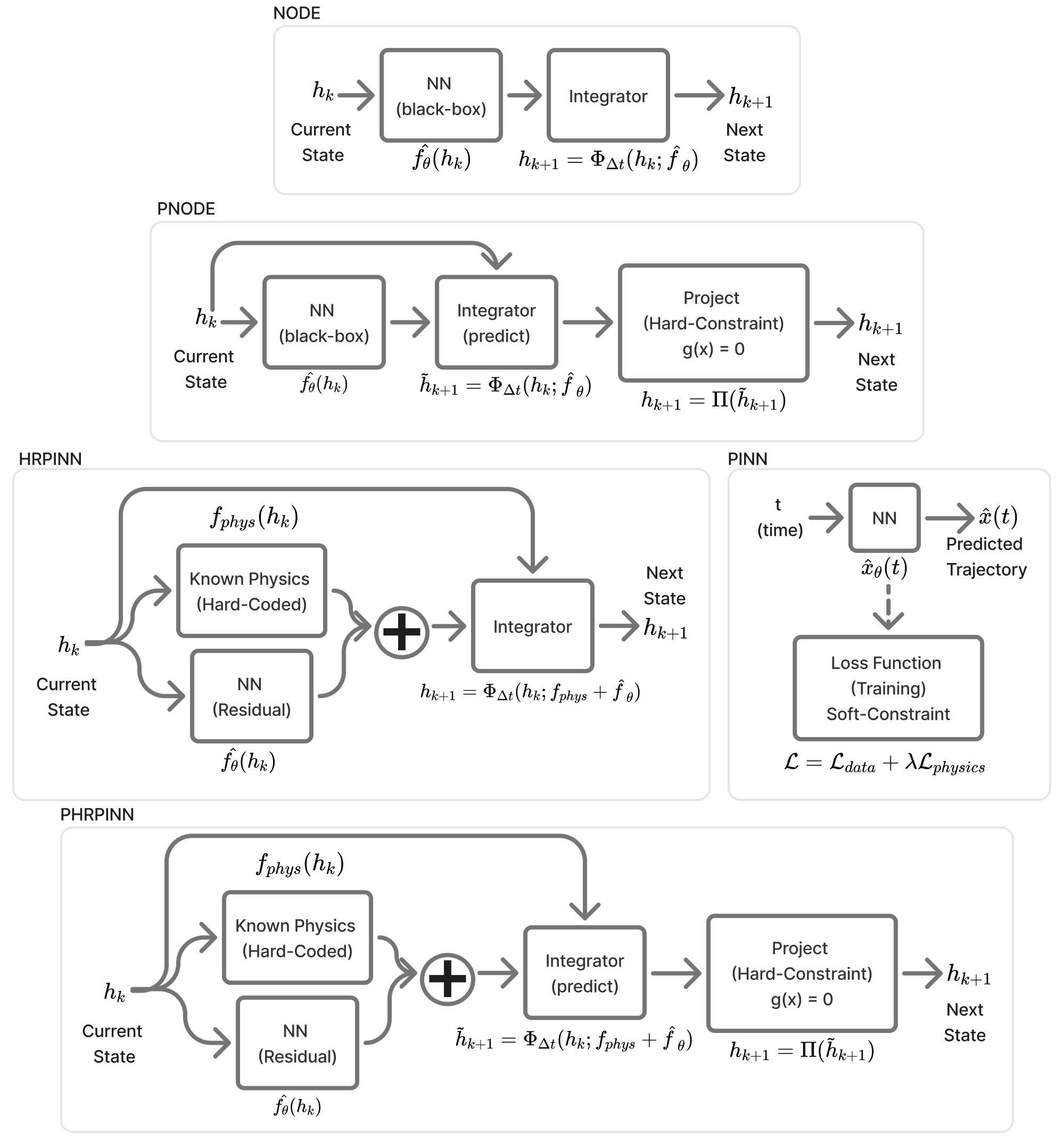}
    \caption{
    A comparative illustration of the PIML architectures discussed.
    \textbf{NODE} learns the entire dynamics $\hat{f}_{\theta}$ as a black-box.
    \textbf{PNODE} augments this with a hard-constraint projection step to enforce invariants $g(x)=0$.
    \textbf{PINN} approximates the solution trajectory $\hat{x}_{\theta}(t)$ directly and enforces physics via a soft penalty in the loss function.
    In contrast, our proposed \textbf{HRPINN} hard-codes the known physics $f_{phys}$ and uses a neural network to learn only the residual dynamics $\hat{f}_{\theta}$.
    Our \textbf{PHRPINN} extends this residual-learning approach by integrating a final projection step, strictly enforcing algebraic invariants $g(x)=0$ by design.}
    \label{fig:architectures}
\end{figure*}

\subsection{The HRPINN Architecture for ODEs}
The HRPINN architecture provides a concrete and purposeful realization of the Universal Differential Equation (UDE) philosophy tailored to time-series learning. HRPINN is a recurrent grey-box model designed for systems governed by ODEs (Eq. \ref{eq:ode_formulation}). Its core is a recurrent cell that hard-codes a numerical integrator for the known physics $\mathbf{f}_{\mathrm{phys}}$, while a neural network learns the residual $\mathbf{f}_{\mathrm{unk}}$. While UDEs specify $\dot{\mathbf{x}} = \mathbf{f}_{\mathrm{phys}} + \hat{\mathbf{f}}_{\boldsymbol{\theta}}$ at a mathematical level, HRPINN makes two concrete architectural choices: (i) structure the model as a recurrent integrator cell; and (ii) train via Backpropagation Through Time (BPTT), which in this context acts as a discrete adjoint sensitivity analysis on the unrolled integrator operations.

The state update rule for the hidden state $\mathbf{h}_k \approx \mathbf{x}(t_k)$ is:
\begin{equation}
\mathbf{h}_{k+1} = \Phi_{\Delta t}\big(\mathbf{h}_k;\, \mathbf{f}_{\mathrm{phys}}(\cdot) + \hat{\mathbf{f}}_{\boldsymbol{\theta}}(\cdot)\big),
\end{equation}
where $\Phi_{\Delta t}$ is a numerical integrator (e.g., Forward Euler, RK4). The model is trained end-to-end on time-series data using Backpropagation Through Time (BPTT) to minimize a sequence loss, e.g., $\mathcal{L}(\boldsymbol{\theta}) = \sum_k \|\mathbf{h}_k - \mathbf{y}_k\|^2$. This design directly enforces the known ODE structure, avoiding the pitfalls of soft-constraint loss balancing \citep{pal2025semi}.

This model is trained end-to-end on time-series data using Backpropagation Through Time (BPTT) to minimize a sequence loss, e.g.,
\[
\mathcal{L}(\boldsymbol{\theta}) = \sum_{k=0}^{K} \ell_{\mathrm{data}}(\mathbf{h}_k,\mathbf{y}_k),
\]
where $\{\mathbf{y}_k\}$ are the observed time-series measurements corresponding to the true states (or observables) at times $t_k$, and $\ell_{\mathrm{data}}$ is typically the Mean Squared Error (MSE). The overall training process is summarized in Algorithm~\ref{alg:hrpinn}.

\begin{algorithm}[H]
\caption{HRPINN Training Step}
\label{alg:hrpinn}
\begin{algorithmic}[1]
\State \textbf{Input:} Initial state $\mathbf{h}_0$, observations $\{\mathbf{y}_k\}$, physics $\mathbf{f}_{\mathrm{phys}}$, parameters $\boldsymbol{\theta}$, integrator $\Phi_{\Delta t}$.
\State \textbf{Forward Pass:}
\For{$k=0$ to $K-1$}
    \State $\mathbf{h}_{k+1} \leftarrow \Phi_{\Delta t}\big(\mathbf{h}_k;\, \mathbf{f}_{\mathrm{phys}}(\mathbf{h}_k) + \hat{\mathbf{f}}_{\boldsymbol{\theta}}(\mathbf{h}_k)\big)$ \Comment{Unroll trajectory}
\EndFor
\State Compute loss $\mathcal{L}(\boldsymbol{\theta})$ based on $\{\mathbf{h}_k\}$ and $\{\mathbf{y}_k\}$.
\State \textbf{Backward Pass:}
\State Compute gradient $\nabla_{\boldsymbol{\theta}}\mathcal{L}$ via BPTT.
\State Update parameters: $\boldsymbol{\theta} \leftarrow \boldsymbol{\theta} - \eta \nabla_{\boldsymbol{\theta}}\mathcal{L}$.
\end{algorithmic}
\end{algorithm}

\subsection{The PHRPINN Extension for DAEs}
For systems with algebraic constraints (DAEs), we extend HRPINN to PHRPINN (Projected HRPINN). This architecture combines the recurrent residual learning of HRPINN with a manifold projection step to enforce the algebraic constraint $\mathbf{g}(\mathbf{x})=\mathbf{0}$. This approach is directly inspired by classical numerical solvers for semi-explicit DAEs and the PNODE methodology \citep{pal2025semi}.

The PHRPINN cell performs a two-stage update at each time step:
\begin{enumerate}
    \item \textbf{Predict:} An intermediate, unconstrained state $\tilde{\mathbf{h}}_{k+1}$ is computed using the standard HRPINN update rule. This step integrates the dynamics forward in time, but the resulting state may drift off the constraint manifold $\mathcal{M} = \{\mathbf{x} \mid \mathbf{g}(\mathbf{x}) = \mathbf{0}\}$.
    \begin{equation}
        \tilde{\mathbf{h}}_{k+1} = \Phi_{\Delta t}\big(\mathbf{h}_k;\, \mathbf{f}_{\mathrm{phys}} + \hat{\mathbf{f}}_{\boldsymbol{\theta}}\big).
    \end{equation}
    \item \textbf{Project:} The intermediate state $\tilde{\mathbf{h}}_{k+1}$ is corrected by projecting it back onto the manifold $\mathcal{M}$ to find the final state $\mathbf{h}_{k+1}$. This projection finds the closest point on the manifold to the predicted point.
\end{enumerate}

\begin{definition}[Orthogonal Manifold Projection]
The projection operator $\Pi$ maps a point $\tilde{\mathbf{x}}$ to the closest point $\mathbf{x}^*$ on the constraint manifold $\mathcal{M}$ by solving the constrained optimization problem:
\begin{equation}
\mathbf{x}^* = \Pi(\tilde{\mathbf{x}}) = \arg\min_{\mathbf{x}} \frac{1}{2}\|\mathbf{x} - \tilde{\mathbf{x}}\|_2^2 \quad \text{s.t.} \quad \mathbf{g}(\mathbf{x}) = \mathbf{0}.
\end{equation}
Using Lagrange multipliers $\boldsymbol{\lambda}$, the solution is characterized by the Karush-Kuhn-Tucker (KKT) conditions, which form a system of nonlinear equations:
\begin{subequations}
\label{eq:phrpinn-kkt}
\begin{align}
\mathbf{x}^* - \tilde{\mathbf{x}} + G(\mathbf{x}^*)^\top \boldsymbol{\lambda} &= \mathbf{0} \quad &\text{(Stationarity)} \\
\mathbf{g}(\mathbf{x}^*) &= \mathbf{0} \quad &\text{(Primal Feasibility)}
\end{align}
\end{subequations}
where $G(\mathbf{x}) = \partial \mathbf{g}/\partial \mathbf{x}$ is the constraint Jacobian.
\end{definition}

This is the standard formulation for finding the closest point on a manifold via orthogonal projection \citep{nocedal2006numerical}.

This two-stage process yields a hybrid architecture that is physics-informed in two ways: (i) known ODE dynamics are embedded within the integrator, and (ii) algebraic invariants are strictly enforced by the projection step.

For the projection step, PHRPINN supports two strategies, analogous to PNODEs \citep{pal2025semi}: a \emph{robust} variant that solves the full nonlinear KKT system in \eqref{eq:phrpinn-kkt} to a given tolerance (e.g., using a Newton method), and a \emph{fast} single-factorization variant that linearizes the system around the predicted state $\tilde{\mathbf{h}}_{k+1}$ to reuse a single matrix factorization per step.

\subsection{Differentiability of the Projection Operator}

We compute gradients using discrete adjoints for the time integrator and implicit differentiation of the projection KKT system. Continuous-adjoint strategies may require constructing a time-continuous interpolant of the discrete trajectory; such interpolants can leave the constraint manifold and create numerical instability in the adjoint computations. For these reasons (and in line with recent practical studies such as PNODE \citep{pal2025semi}) we adopt discrete adjoints and implicit differentiation, which yield stable, well-defined gradients for the predict–project architecture.

To train the PHRPINN model end-to-end using BPTT, we must compute gradients through the projection step $\mathbf{x}^* = \Pi(\tilde{\mathbf{x}})$. Since the projection is defined implicitly by the KKT system \eqref{eq:phrpinn-kkt}, we use the implicit function theorem to find its Jacobian, following the approach used in PNODE and OptNet \citep{pal2025semi, amos2017optnet}. This process involves solving the adjoint of a large linear system known as the KKT sensitivity system. For a full derivation of this system and a detailed analysis of the projector's Jacobian, please see Appendix \ref{app:Bprojection_derivation}.

\paragraph{Tangent-Space Closed-Form (Hessian Neglected).}
The \emph{robust} projection, which solves the full KKT system \eqref{eq:phrpinn-kkt}, can be computationally expensive. We therefore adopt the \emph{fast} tangent-space projector, which is also used in the PNODE framework \citep{pal2025semi}. This method uses an efficient, explicit formula for the projection Jacobian:
\begin{equation}
J_\Pi \;=\; I - G(\mathbf{x}^*)^\top\,(G(\mathbf{x}^*) G(\mathbf{x}^*)^\top)^{-1} \, G(\mathbf{x}^*).
\label{eq:tangent_projector}
\end{equation}
This is the standard formula for an orthogonal projector onto the tangent space of the manifold at $\mathbf{x}^*$. It is computationally efficient when the number of constraints $m$ is small, as it only requires solving an $m \times m$ linear system. The PNODE authors justify this formula as the result of a single Newton step used to solve the KKT conditions \citep{pal2025semi}. In our Appendix \ref{app:Bprojection_derivation}, we provide an alternative justification via formal sensitivity analysis, showing this formula is a principled approximation derived by neglecting the Hessian (i.e., manifold-curvature) terms. For numerical stability, we compute the projector by first finding an orthonormal basis $Q$ for the row space of $G$ via a QR factorization (from \textbf{Q}rthogonal and upper \textbf{R}iangular) or SVD, and then forming $J_\Pi = I - Q^\top Q$. This avoids explicitly forming $G G^\top$, which can be severely ill-conditioned.

\paragraph{Scope and Edge Cases.}
Our differentiability analysis covers equality-constrained, index-1 DAEs where the Linear Independence Constraint Qualification (LICQ, A6) holds. If LICQ fails (e.g., constraints become redundant), if constraints include inequalities, or if the manifold self-intersects, the projection may become non-unique or non-differentiable (as the mapping from $\tilde{\mathbf{x}}$ to $\mathbf{x}^*$ is no longer smooth). This class includes many well-behaved constrained mechanical systems (like multi-body pendulums away from singular configurations) or power grid models. In our work, we assume that system trajectories remain in regions where LICQ holds and the KKT matrix is non-singular. As a practical fallback for systems that might violate these conditions, we recommend using an augmented Lagrangian (ALM) or a smooth penalty relaxation during training to maintain differentiability \citep{white2023stabilized}.
\begin{algorithm}[H]
\caption{PHRPINN Training Step}
\label{alg:phrpinn}
\begin{algorithmic}[1]
\State \textbf{Input:} Initial state $\mathbf{h}_0$, observations $\{\mathbf{y}_k\}$, physics $\mathbf{f}_{\mathrm{phys}}$, constraints $\mathbf{g}$, parameters $\boldsymbol{\theta}$.
\State \textbf{Forward Pass:}
\For{$k=0$ to $K-1$}
    \State $\tilde{\mathbf{h}}_{k+1} \leftarrow \Phi_{\Delta t}\big(\mathbf{h}_k;\, \mathbf{f}_{\mathrm{phys}} + \hat{\mathbf{f}}_{\boldsymbol{\theta}}\big)$ \Comment{Predict step}
    \State $\mathbf{h}_{k+1} \leftarrow \text{Project}(\tilde{\mathbf{h}}_{k+1}, \mathbf{g})$ \Comment{Solve KKT system \eqref{eq:phrpinn-kkt}}
\EndFor
\State Compute loss $\mathcal{L}(\boldsymbol{\theta})$.
\State \textbf{Backward Pass:}
\State Compute gradient $\nabla_{\boldsymbol{\theta}}\mathcal{L}$ via BPTT, using implicit differentiation for the projection step.
\State Update parameters: $\boldsymbol{\theta} \leftarrow \boldsymbol{\theta} - \eta \nabla_{\boldsymbol{\theta}}\mathcal{L}$.
\end{algorithmic}
\end{algorithm}
\paragraph{Local uniqueness of orthogonal projection.}
Orthogonal projection onto a smooth manifold is locally unique only within the manifold's normal injectivity radius. Accordingly, differentiability and Newton convergence for the projection are
local properties: the predictor $\tilde x$ must remain sufficiently close to $\mathcal M$ for uniqueness to hold. Practically, this motivates choosing predictor steps that keep $\tilde x$ within a tubular neighborhood of $\mathcal M$; if the predictor leaves this neighborhood, projection may become non-unique and additional regularization or continuation strategies are required. The full predict–project training loop for PHRPINN, which combines the HRPINN integration with an orthogonal projection at each step, is detailed in Algorithm~\ref{alg:phrpinn}.

\subsection{Theoretical Result}
The architectural choice of HRPINN is justified by its ability to represent the same solution space as standard PINNs, while offering practical advantages. We formalize this equivalence with the following theorem, which states that under standard regularity and approximation assumptions, a trajectory generated by an HRPINN can be approximated by a standard PINN, and vice versa.

\paragraph{Clarification (scope of Theorem~\ref{thm:equivalence}).}
Theorem~\ref{thm:equivalence} establishes \emph{representational equivalence} between HRPINN and PINN formulations: each framework can in principle represent the same set of trajectories under the stated regularity and approximation assumptions. It does \textbf{not} make claims about the practical difficulty of optimization, convergence speed, or sample complexity. The result assumes exact initial
states (error $e_0=0$) and requires the universal approximator class to be capable of approximating
the relevant functions on a compact set that contains the true trajectory and a tubular neighborhood
around it (to account for perturbed states during integration). Our conjectures that follow are meant as practical motivations (hypotheses) about optimization conditioning and generalization; they are evaluated empirically in Section~V and revisited in the Discussion to assess how the data supports them.

\begin{theorem}[Representational Equivalence]
\label{thm:equivalence}
Suppose Assumptions A1--A6, and A6$'$ hold, the initial state is known exactly ($\mathbf{e}_0 = \mathbf{0}$), and the UAT holds on a compact tubular neighborhood containing the true system trajectory. Let the true system solution be $\mathbf{x}^*(t)$ and the unknown physics be $\mathbf{f}_{\mathrm{unk}}$.

\begin{enumerate}
    \item \textbf{(PINN $\Rightarrow$ HRPINN)} If a standard PINN can approximate $\mathbf{x}^*(t)$ and $\mathbf{f}_{\mathrm{unk}}$, then there exists an HRPINN whose learned component $\hat{\mathbf{f}}_{\boldsymbol{\phi}}$ can approximate $\mathbf{f}_{\mathrm{unk}}$ such that the HRPINN's trajectory $\mathbf{h}_k$ remains arbitrarily close to the true solution samples $\mathbf{x}^*(t_k)$, with an error bound dependent on the integrator's Local Truncation Error (LTE) and the network approximation error.

    \item \textbf{(HRPINN $\Rightarrow$ PINN)} Conversely, if an HRPINN's trajectory $\mathbf{h}_k$ successfully approximates the true solution samples $\mathbf{x}^*(t_k)$, then there exist neural networks for a standard PINN that can approximate the continuous trajectory $\mathbf{x}^*(t)$ and the unknown physics $\mathbf{f}_{\mathrm{unk}}$, such that the PINN's physics-based residual can be made arbitrarily small.
\end{enumerate}
\end{theorem}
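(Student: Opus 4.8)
The plan is to prove the two implications separately; each reduces to pairing one approximation-theoretic ingredient with one elementary stability estimate. Direction~1 (PINN $\Rightarrow$ HRPINN) is a convergence argument for a perturbed one-step integrator: substitute a network for the approximable residual field, bound the resulting defect per step, and propagate it by a discrete Gr\"onwall inequality. Direction~2 (HRPINN $\Rightarrow$ PINN) is the reverse construction: build a solution-network and a residual-network that track $\mathbf{x}^*$ and $\mathbf{f}_{\mathrm{unk}}$, then observe that the PINN's physics residual collapses because the exact-ODE term cancels.

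For Direction~1, I would first fix a compact tubular neighborhood $N_\rho = \{\mathbf{x} : \operatorname{dist}(\mathbf{x}, \{\mathbf{x}^*(t)\}_{t\in[0,T]}) \le \rho\}$ on which A4 applies, and use universal approximation to pick $\hat{\mathbf{f}}_{\boldsymbol{\phi}}$ with $\sup_{N_\rho}\|\hat{\mathbf{f}}_{\boldsymbol{\phi}} - \mathbf{f}_{\mathrm{unk}}\| \le \varepsilon$. Running $\mathbf{h}_{k+1} = \Phi_{\Delta t}(\mathbf{h}_k; \mathbf{f}_{\mathrm{phys}} + \hat{\mathbf{f}}_{\boldsymbol{\phi}})$ from $\mathbf{h}_0 = \mathbf{x}_0$ (using $\mathbf{e}_0 = \mathbf{0}$), the one-step error against the exact flow splits into the integrator local truncation error, $O(\Delta t^{p+1})$ with $p \ge 1$ by A5, and a field-perturbation term of size $O(\Delta t\,\varepsilon)$, since over one step the flows of $\mathbf{f}_{\mathrm{phys}} + \mathbf{f}_{\mathrm{unk}}$ and $\mathbf{f}_{\mathrm{phys}} + \hat{\mathbf{f}}_{\boldsymbol{\phi}}$ differ by $O(\Delta t\,\|\hat{\mathbf{f}}_{\boldsymbol{\phi}} - \mathbf{f}_{\mathrm{unk}}\|)$. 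With the Lipschitz constant $L$ of $\mathbf{f}$ from A3, the discrete Gr\"onwall lemma gives
\[
\max_k \|\mathbf{h}_k - \mathbf{x}^*(t_k)\| \;\le\; \frac{e^{LT}-1}{L}\big(C_\Phi\,\Delta t^{p} + \varepsilon\big),
\]
which is below any prescribed tolerance once $\Delta t$ and $\varepsilon$ are small. The subtle point is that the field-perturbation estimate is only valid while $\mathbf{h}_k \in N_\rho$; I would close the loop by a continuation argument, choosing $\rho$ first and then $\Delta t,\varepsilon$ small enough that the right-hand side is $\le \rho/2$, so that the first exit index (if any) would contradict the bound just derived. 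The hypothesis that a PINN approximates $\mathbf{x}^*$ and $\mathbf{f}_{\mathrm{unk}}$ enters only to certify that $\mathbf{f}_{\mathrm{unk}}$ lies in the approximation class over $N_\rho$ — automatic under A2 and A4.

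For Direction~2, A2 gives $\mathbf{x}^* \in C^1([0,T];\mathbb{R}^n)$. I would invoke a universal approximation result in $C^1$-norm (simultaneous uniform approximation of a function and its derivative) to get a network $\hat{\mathbf{x}}_{\boldsymbol{\psi}}$ with $\sup_t\|\hat{\mathbf{x}}_{\boldsymbol{\psi}} - \mathbf{x}^*\| + \sup_t\|\dot{\hat{\mathbf{x}}}_{\boldsymbol{\psi}} - \dot{\mathbf{x}}^*\| \le \delta$, and A4 to get $\hat{\mathbf{f}}_{\boldsymbol{\chi}}$ with $\sup\|\hat{\mathbf{f}}_{\boldsymbol{\chi}} - \mathbf{f}_{\mathrm{unk}}\| \le \eta$ on a compact neighborhood of the trajectory. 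Writing the PINN physics residual $\mathbf{r}(t) = \dot{\hat{\mathbf{x}}}_{\boldsymbol{\psi}}(t) - \mathbf{f}_{\mathrm{phys}}(\hat{\mathbf{x}}_{\boldsymbol{\psi}}(t)) - \hat{\mathbf{f}}_{\boldsymbol{\chi}}(\hat{\mathbf{x}}_{\boldsymbol{\psi}}(t))$ and inserting $\pm\dot{\mathbf{x}}^*$, $\pm\mathbf{f}_{\mathrm{phys}}(\mathbf{x}^*)$, $\pm\mathbf{f}_{\mathrm{unk}}(\mathbf{x}^*)$, the combination $\dot{\mathbf{x}}^* - \mathbf{f}_{\mathrm{phys}}(\mathbf{x}^*) - \mathbf{f}_{\mathrm{unk}}(\mathbf{x}^*)$ vanishes identically by \eqref{eq:ode_formulation}; the remaining terms are bounded by $\delta$, by the Lipschitz constants of $\mathbf{f}_{\mathrm{phys}}$ and $\mathbf{f}_{\mathrm{unk}}$ (finite on the compact set by A2) times $\delta$, and by $\eta$, so $\sup_{t\in[0,T]}\|\mathbf{r}(t)\| \le C\delta + \eta$ can be made arbitrarily small. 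Here the HRPINN hypothesis only certifies that there is a trajectory worth matching, namely $\mathbf{x}^*$ itself up to the stated closeness, after which the construction anchors on $\mathbf{x}^*$ directly.

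I expect the main obstacle to be twofold. In Direction~1 it is the bootstrap: the per-step perturbation estimate presupposes that the numerical trajectory stays where the network is accurate, which is precisely what the global bound is meant to conclude, so the argument must be staged as a maximal-interval/continuation argument rather than a one-shot estimate. In Direction~2 it is the strengthening of A4: the PINN residual contains $\dot{\hat{\mathbf{x}}}_{\boldsymbol{\psi}}$, so plain $C^0$ universal approximation does not suffice and one must use simultaneous approximation of a function together with its first derivative — available for networks with sufficiently smooth activations, but a genuinely stronger ingredient that I would flag as an explicit assumption on the activation (or as a reading of A4 in a $C^1$ sense).
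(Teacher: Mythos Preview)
Your proposal is correct, and Direction~1 matches the paper's argument essentially line for line --- the same decomposition into LTE plus field-perturbation defect, the same discrete Gr\"onwall bound $\frac{e^{LT}-1}{L}(\varepsilon + C\Delta t^p)$, with your continuation/bootstrap argument actually filling a gap the paper leaves implicit.

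Direction~2 is where you and the paper genuinely diverge. You anchor directly on the true solution $\mathbf{x}^*$, invoke a $C^1$ universal approximation result to get $\hat{\mathbf{x}}_{\boldsymbol{\psi}}$ close in both value and derivative, and then control the PINN residual by a Lipschitz/triangle-inequality estimate; as you note, the HRPINN hypothesis plays no real role beyond certifying that $\mathbf{x}^*$ exists. The paper instead takes a constructive route through the HRPINN output: it builds a $C^1$ cubic Hermite interpolant $\tilde{\mathbf{x}}(t)$ through the discrete states $\{\mathbf{h}_k\}$ (using derivative values from the integrator's stages), appeals to standard interpolation error estimates to get $\|\tilde{\mathbf{x}} - \mathbf{x}^*\| = O(\Delta t^{p+1})$ and $\|\dot{\tilde{\mathbf{x}}} - \dot{\mathbf{x}}^*\| = O(\Delta t^p)$, and then defines the PINN targets as $\tilde{\mathbf{x}}$ and the residual $r(t) = \dot{\tilde{\mathbf{x}}} - \mathbf{f}_{\mathrm{phys}}(\tilde{\mathbf{x}})$. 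Your approach is shorter and isolates the essential ingredient --- you are right that the key subtlety is simultaneous approximation of a function and its derivative, and you flag it explicitly as a strengthening of A4. The paper's Hermite construction buys a more literal ``transfer'' from HRPINN to PINN (the PINN targets are built \emph{from} the HRPINN states rather than from $\mathbf{x}^*$), and it ties the error explicitly to the integrator order $p$; but it still tacitly needs $C^1$-closeness of $\hat{\mathbf{x}}_{\boldsymbol{\theta}}$ to $\tilde{\mathbf{x}}$ for the residual to be controlled, so it does not in fact escape the $C^1$-UAT issue you identified.
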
 

\begin{proof}[Proof Sketch]
The full proof relies on two core principles: the Universal Approximation Theorem (UAT) and standard error bounds for numerical integrators. The UAT guarantees that neural networks in both frameworks have the capacity to approximate the necessary continuous functions (either the solution $\mathbf{x}^*(t)$ or the unknown dynamics $\mathbf{f}_{\mathrm{unk}}$). The proof then uses a discrete Grönwall inequality to show that the HRPINN's trajectory error is bounded by the sum of its neural network's approximation error and the integrator's LTE. Conversely, we show that by constructing a specialized $C^1$ cubic Hermite interpolant through the HRPINN's discrete states, we can guarantee bounded error on both the solution \textit{and its derivatives}. This allows a standard PINN, via the UAT, to approximate both the continuous trajectory and its corresponding physics residual. A complete, formal proof with all preliminaries, assumptions, and derivations is provided in Appendix \ref{app:Ctheoretical_foundations}.
\end{proof}

We do not claim a formal proof of superior optimization or generalization, but we provide empirical evidence for these conjectures in our experiments.

\subsection{Conjectured Advantages of the Hard-Constraint Approach}
The motivation for the HRPINN's design lies in two primary conjectured advantages over soft-constrained PINNs.

\begin{conjecture}[Improved Optimization Conditioning]
The HRPINN architecture may foster a more stable and amenable optimization landscape. By hard-coding known physics, the neural network $\hat{\mathbf{f}}_\phi$ learns a potentially simpler residual function. This avoids forcing a single network $\hat{\mathbf{x}}_\theta$ to satisfy stiff differential constraints, which can create ill-conditioned loss landscapes and gradient pathologies in standard PINNs \citep{wang2021understanding, urban2025unveiling}. Mechanistically, the recurrent structure's state transition Jacobian can be kept well-conditioned by selecting an appropriate integrator time step $\Delta t$, which directly controls the eigenvalues of the Jacobian and thus mitigates the vanishing and exploding gradient problems in BPTT.
\end{conjecture}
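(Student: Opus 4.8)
The plan is to split the conjecture into the part that admits a genuine proof---spectral control of the Backpropagation-Through-Time (BPTT) Jacobian via the step size $\Delta t$---and the part that does not, namely the claim that the residual $\mathbf{f}_{\mathrm{unk}}$ is a ``simpler'' learning target with a better-conditioned loss landscape. I would prove the first part unconditionally and reduce the second to an explicit structural hypothesis plus quantitative approximation rates, presenting it as a conditional lemma rather than a theorem.

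For the spectral part I would proceed as follows. Fix a one-step integrator of order at least one (Assumption A5); for concreteness take forward Euler, $\Phi_{\Delta t}(\mathbf{h};\mathbf{f}) = \mathbf{h} + \Delta t\,\mathbf{f}(\mathbf{h})$, the general case differing only by the stage structure. The state-transition Jacobian of the recurrent cell is
\[
A_k \;:=\; \frac{\partial \mathbf{h}_{k+1}}{\partial \mathbf{h}_k} \;=\; I \;+\; \Delta t\,\big(J_{\mathrm{phys}}(\mathbf{h}_k) + J_{\boldsymbol{\theta}}(\mathbf{h}_k)\big) \;+\; R_k,\qquad \|R_k\| \le C\,\Delta t^2,
\]
where $J_{\mathrm{phys}}$ and $J_{\boldsymbol{\theta}}$ are the Jacobians of the known physics and the network residual, and $C$ depends only on second-order data on the compact domain $D$ (Assumptions A1--A2). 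By A3, $\|J_{\mathrm{phys}}(\mathbf{h}_k)+J_{\boldsymbol{\theta}}(\mathbf{h}_k)\| \le L$ uniformly along the trajectory, so every eigenvalue of $A_k$ lies in the disk of radius $L\Delta t + C\Delta t^2$ about $1$, and for $\Delta t$ small enough that $\tilde L\Delta t < 1$ with $\tilde L := L + C\Delta t$ one gets $\|A_k\| \le 1 + \tilde L\Delta t$ and $\|A_k^{-1}\| \le (1-\tilde L\Delta t)^{-1}$. This is the precise content of ``selecting $\Delta t$ controls the eigenvalues of the Jacobian.'' BPTT gradients propagate through the products $\prod_{j=k}^{K-1} A_j$, and the bounds above give $\big\|\prod_{j=k}^{K-1} A_j\big\| \le (1+\tilde L\Delta t)^{K-k} \le e^{\tilde L (T - t_k)}$ together with a matching lower bound $e^{-\tilde L(T-t_k)}$ on the smallest singular value---the same discrete Grönwall estimate already used in the proof of Theorem~\ref{thm:equivalence}. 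Over a fixed horizon $T$ these bounds are independent of the number of steps $K$, so neither the exploding- nor the vanishing-gradient pathology occurs: this is the ``mechanistic'' claim of the conjecture made rigorous.

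For the residual-complexity half I would introduce a quantitative complexity functional on $D$---e.g.\ a Sobolev seminorm $\|\cdot\|_{W^{1,\infty}(D)}$ or a Barron-type norm---and note that $\mathbf{f}_{\mathrm{unk}} = \mathbf{f}_{\mathrm{true}} - \mathbf{f}_{\mathrm{phys}}$ only yields the triangle bound $\|\mathbf{f}_{\mathrm{unk}}\| \le \|\mathbf{f}_{\mathrm{true}}\| + \|\mathbf{f}_{\mathrm{phys}}\|$, which does \emph{not} by itself establish any reduction. A reduction follows only under an explicit hypothesis that $\mathbf{f}_{\mathrm{phys}}$ absorbs the high-seminorm (stiff) part of the dynamics---formally, that $\mathbf{f}_{\mathrm{true}}$ and $\mathbf{f}_{\mathrm{phys}}$ are close in the chosen seminorm while possibly far pointwise. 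Under that hypothesis I would invoke quantitative UAT rates (network width scaling with the target's Barron norm) to conclude that a strictly smaller network suffices to reach a given residual tolerance, and hence---through a Gauss--Newton decomposition of the BPTT Hessian---that the dominant curvature term of $\mathcal{L}$ is reduced relative to a PINN that must represent the full solution. I would state this as a conditional lemma, not a theorem.

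The main obstacle is exactly this last step: there is no unconditional reason the residual is simpler, since an adversarial choice of $\mathbf{f}_{\mathrm{phys}}$ could make $\mathbf{f}_{\mathrm{unk}}$ \emph{more} complex than $\mathbf{f}_{\mathrm{true}}$; the honest conclusion is that the BPTT spectral estimates above constitute a genuine theorem while the loss-landscape-conditioning statement remains a conjecture, justified by the structural hypothesis and by the experiments in Section~V. A secondary technical difficulty is that ``well-conditioned loss landscape'' must first be operationalized (spectrum of the Hessian at minimizers, or a Polyak--Lojasiewicz constant), and bounding the full BPTT Hessian requires controlling the residual-weighted curvature term in addition to the Gauss--Newton part; I would therefore restrict the rigorous landscape statement to the Gauss--Newton component, where the transition-Jacobian bounds enter directly, and leave the full-Hessian claim to the empirical study.
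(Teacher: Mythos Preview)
Your analysis is more careful than the paper itself: the paper does \emph{not} prove this statement at all. It is explicitly labeled a conjecture, and the authors state plainly that ``we do not claim a formal proof of superior optimization or generalization, but we provide empirical evidence for these conjectures in our experiments.'' The paper's support consists solely of (i) the informal mechanistic remark already contained in the conjecture's text, and (ii) an empirical comparison (Table~\ref{tab:conjecture1_full_improved}) of final training losses across 50 runs on the six benchmark systems, showing PHRPINN reaches a lower mean loss than PNODE on 5 of 6 systems with lower variance. In the limitations section the authors explicitly list a formal landscape analysis as future work.

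Your proposal therefore goes substantially \emph{beyond} the paper. The spectral argument you give for the BPTT transition Jacobian---bounding $\|A_k\|$ and $\|A_k^{-1}\|$ via $A_k = I + \Delta t\,J + O(\Delta t^2)$ and then controlling $\prod_j A_j$ by a discrete Gr\"onwall estimate over a fixed horizon $T$---is a genuine and correct formalization of the ``mechanistic'' sentence that the paper merely asserts. Your honest separation of this from the residual-complexity claim, and your observation that an adversarial $\mathbf{f}_{\mathrm{phys}}$ can make $\mathbf{f}_{\mathrm{unk}}$ \emph{more} complex than $\mathbf{f}_{\mathrm{true}}$, is exactly right and is consistent with the paper's own empirical finding that the generalization benefit is system-dependent (it fails on MassSpring). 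One minor caveat on your spectral argument: the uniform Lipschitz bound $\|J_{\mathrm{phys}}+J_{\boldsymbol{\theta}}\|\le L$ along the trajectory requires controlling the network Jacobian $J_{\boldsymbol{\theta}}$ throughout training, which is not automatic from Assumption~A3 alone (that assumption concerns the true $\mathbf{f}_{\mathrm{unk}}$, not the learned $\hat{\mathbf{f}}_{\boldsymbol{\theta}}$); you would need either explicit weight regularization or a spectral-normalization assumption on the network to close this. With that added hypothesis, your spectral half stands as a theorem the paper does not contain.
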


\begin{conjecture}[Superior Generalization]
By focusing the learning task on only the unknown dynamics $\mathbf{f}_{\text{unk}}$, HRPINNs may generalize better than standard PINNs from finite data. If the unknown residual function $\mathbf{f}_{\mathrm{unk}}(\mathbf{x},t)$ (the learning target for HRPINN) has a lower intrinsic complexity than the full state trajectory $\mathbf{x}(t)$ (the learning target for a standard PINN), the HRPINN can achieve an accurate approximation with a neural network of lower effective complexity. As supported by statistical learning theory, models with lower complexity (e.g., smaller Rademacher complexity) tend to have tighter generalization bounds and exhibit better performance on unseen data \citep{shalev2014understanding}. Our experimental results provide empirical evidence for this conjecture. The strength of this benefit is highly system-dependent, proving most effective when: (i) the known physics $\mathbf{f}_{\mathrm{phys}}$ represent a significant portion of the dynamics, (ii) the unknown residual $\mathbf{f}_{\mathrm{unk}}$ is a relatively simple function for the network to learn, and (iii) the available training data is sparse, making the structural prior particularly valuable. We demonstrate that it can hold strongly under these conditions, though its benefits are not universal.
\end{conjecture}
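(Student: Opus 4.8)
Since the statement is a conjecture, the plan is to extract a precise conditional version and prove it via the bias--variance decomposition of statistical learning theory. Fix a distribution $\mathcal P$ over initial conditions and input segments, and for any model $M$ let the population risk be $R(M)=\mathbb E_{\mathcal P}\big[\tfrac1K\sum_k\|M_k-\mathbf x^*(t_k)\|^2\big]$ and $\widehat R_N(M)$ its empirical average over $N$ i.i.d.\ sampled trajectories. Take the HRPINN hypothesis class $\mathcal H_{\mathrm{HR}}$ to be the maps obtained by unrolling $\Phi_{\Delta t}$ with dynamics $\mathbf f_{\mathrm{phys}}+\hat{\mathbf f}_{\boldsymbol\phi}$, where $\hat{\mathbf f}_{\boldsymbol\phi}$ ranges over a size-/norm-bounded network class $\mathcal F_\phi$, and the PINN class $\mathcal H_{\mathrm P}$ to be trajectory networks $t\mapsto\hat{\mathbf x}_\theta(t)$ of a comparable architectural budget. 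The goal is to bound the risk of each empirical risk minimizer on a common yardstick (held-out trajectory MSE) and show the HRPINN's is smaller under hypotheses that make conditions (i)--(iii) of the conjecture quantitative.

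For the bias (approximation) term I would invoke Theorem~\ref{thm:equivalence}(1): choosing $\hat{\mathbf f}_{\boldsymbol\phi}$ within $\varepsilon_{\mathrm a}$ of $\mathbf f_{\mathrm{unk}}$ on the compact tube forces the HRPINN trajectory sup-error to $O(\varepsilon_{\mathrm a}+\Delta t^{p})$ up to a problem-dependent Gr\"onwall constant, so the HRPINN bias is controlled by the best $\mathcal F_\phi$-approximation of $\mathbf f_{\mathrm{unk}}$, which for a fixed network budget is governed by a smoothness/Barron-type norm $\|\mathbf f_{\mathrm{unk}}\|_\star$ of the residual. For the PINN the analogous bias is governed by the best approximation of the full trajectory map $\mathbf x^*(\cdot)$ (and, if a residual loss is used, of its derivative, via the same $C^1$ cubic-Hermite argument already used in Theorem~\ref{thm:equivalence}(2)), controlled by a larger norm $\|\mathbf x^*\|_{\star\star}$. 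The conjecture's hypothesis then becomes the quantitative inequality $\|\mathbf f_{\mathrm{unk}}\|_\star\ll\|\mathbf x^*\|_{\star\star}$, which permits $\mathcal F_\phi$ to be strictly smaller than $\mathcal F_\theta$ at equal bias.

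For the estimation (variance) term I would bound $\sup_M|R(M)-\widehat R_N(M)|$ by a constant times the class Rademacher complexity $\mathrm{Rad}_N$ plus the usual $\sqrt{\log(1/\delta)/N}$ confidence term, using the statistical-learning results already cited. For $\mathcal H_{\mathrm P}$ this scales with the size--weight-norm product of $\mathcal F_\theta$; for $\mathcal H_{\mathrm{HR}}$ the extra step is to propagate complexity through the unrolled integrator: each step $\Phi_{\Delta t}$ is Lipschitz in the state with constant $1+L\Delta t$ (Assumptions A3, A5) and enters $\boldsymbol\phi$ only through $\hat{\mathbf f}_{\boldsymbol\phi}$, so a Talagrand contraction together with a telescoping composition bound gives $\mathrm{Rad}_N(\mathcal H_{\mathrm{HR}})\le C\,e^{LT}\,\mathrm{Rad}_N(\mathcal F_\phi)$. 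Combining bias and estimation for each model, in the sparse-data regime (small $N$, so the estimation term dominates) and under conditions (i)--(iii), the dominant term for the HRPINN is $e^{LT}\mathrm{Rad}_N(\mathcal F_\phi)$ against $\mathrm{Rad}_N(\mathcal F_\theta)$ for the PINN, which yields $R(\widehat M_{\mathrm{HR}})<R(\widehat M_{\mathrm P})$ with probability at least $1-\delta$.

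The main obstacle is the sharpness of the $e^{LT}$ amplification in the estimation step. The naive contraction bound blows up as $(1+L\Delta t)^{K}$ over the horizon, and for long $T$ or stiff dynamics this can overwhelm any gain from a smaller residual network --- which is precisely why the conjecture is flagged as system-dependent rather than universal, so a fully general version is likely false and only a conditional theorem is available. Turning mere competitiveness into strict improvement would require additional structure, e.g.\ dissipativity of $\mathbf f_{\mathrm{phys}}$ so that $\Phi_{\Delta t}$ is a genuine contraction and the product collapses to $O(1)$. A secondary difficulty is selecting a single complexity functional that simultaneously (a) makes ``$\mathbf f_{\mathrm{unk}}$ is simpler than $\mathbf x^*$'' translate into ``a smaller network suffices'' through clean approximation theory, and (b) composes gracefully through $\Phi_{\Delta t}$ and through $\mathbf f_{\mathrm{phys}}$; Barron-type norms are attractive for (a), but their behaviour under these compositions would have to be worked out.
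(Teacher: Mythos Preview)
The paper does not prove this statement: it is explicitly labelled a \emph{conjecture}, and the authors write immediately after it that they ``do not claim a formal proof of superior optimization or generalization, but \ldots\ provide empirical evidence for these conjectures in our experiments.'' The paper's support is entirely empirical (the data-efficiency results in Case Study~1, the RobotArm learning curve in Figure~\ref{fig:conjecture2_robotarm}, and the system-by-system generalization plots in Appendix~\ref{app:Aappendix_generalization}), together with the informal learning-theory heuristic already embedded in the conjecture's text. In the Limitations section the authors explicitly list a formal analysis of this conjecture as future work.

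Your proposal therefore goes well beyond what the paper attempts. As a roadmap toward a conditional theorem it is reasonable: the bias--variance split, the use of Theorem~\ref{thm:equivalence} to control the HRPINN approximation error, and the contraction-through-unrolling bound on $\mathrm{Rad}_N(\mathcal H_{\mathrm{HR}})$ are all sensible moves, and you correctly flag the $e^{LT}$ blow-up as the central obstruction that makes the claim system-dependent rather than universal. The main gaps that would need to be closed before this becomes a proof are: (a) you assume rather than derive that conditions (i)--(iii) translate into the quantitative inequality $\|\mathbf f_{\mathrm{unk}}\|_\star \ll \|\mathbf x^*\|_{\star\star}$ and into a strictly smaller $\mathcal F_\phi$ at equal bias --- this is the crux and needs an explicit approximation-theoretic lemma; (b) the PINN class is not simply a trajectory-regression class, since its loss also penalises a physics residual, so bounding its generalization via $\mathrm{Rad}_N(\mathcal F_\theta)$ alone underestimates the constraints on $\mathcal H_{\mathrm P}$ and could make the comparison unfair in the other direction; and (c) ``comparable architectural budget'' needs a precise definition for the comparison $\mathrm{Rad}_N(\mathcal F_\phi)$ vs.\ $\mathrm{Rad}_N(\mathcal F_\theta)$ to be meaningful. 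None of these are fatal, but each requires real work that the paper neither provides nor claims to.
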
 

\subsection{A 7-Step Method for Building HRPINNs}
We present a practical, seven-step process for constructing HRPINNs for a wide range of CPS. For concreteness, we use a spring–mass–damper illustration.

\textbf{Step 1: Identify the system of equations.} Start from ODEs or semi-explicit index-1 DAEs with outputs:
\begin{equation}
\begin{cases}
\dot{\mathbf{x}}(t) = \mathbf{f}(\mathbf{x}(t),\mathbf{z}(t),\mathbf{w}(t),\mathbf{p}, t, \mathbf{f}_{\mathrm{unk}}) \\
\mathbf{0} = \mathbf{h}(\mathbf{x}(t),\mathbf{z}(t),\mathbf{w}(t),\mathbf{p}, t, \mathbf{h}_{\mathrm{unk}}) \\
\mathbf{y}(t) = \mathbf{g}(\mathbf{x}(t),\mathbf{z}(t),\mathbf{w}(t),\mathbf{p}, t, \mathbf{g}_{\mathrm{unk}})
\end{cases}
\end{equation}
\textit{Example:} $m\ddot{x}(t) + f_{\mathrm{damp,unk}}(x, \dot{x}) + kx(t) = F_{ext}(t)$ with state $\mathbf{x}_{sm}=[x,\dot{x}]^\top$ and
{\footnotesize
\begin{equation}
\dot{\mathbf{x}}_{sm}(t)=\begin{bmatrix} x_2(t) \\ \tfrac{1}{m}\big(w(t)-k x_1(t)- f_{\mathrm{damp,unk}}(x_1(t),x_2(t))\big) \end{bmatrix}.
\end{equation}}

\textbf{Step 2: Categorize variables and parameters.} Separate known time-invariant parameters $\mathbf{p}_k$, unknown time-invariant parameters $\mathbf{p}_u$ (to be estimated), known inputs $\mathbf{w}(t)$, outputs $\mathbf{y}(t)$, states $\mathbf{x}(t)$, and algebraic variables $\mathbf{z}(t)$.

\textbf{Step 3: Define HRPINN inputs and outputs.} At discrete times $t_k$, typically set inputs $\mathbf{W}_{\mathrm{HRPINN}}(t_k)=\mathbf{w}(t_k)$ and outputs $\mathbf{Y}_{\mathrm{HRPINN}}(t_k)=\mathbf{y}(t_k)$.

\textbf{Step 4: Manipulate and discretize equations.} Isolate known physics from unknown components and discretize over time. Using Forward Euler:
\begin{equation}
\mathbf{x}_{k+1} = \mathbf{x}_k + \Delta t\,\mathbf{f}_{\mathrm{eff}}(\mathbf{x}_k, \mathbf{z}_k, \mathbf{w}_k, \mathbf{p}, t_k; \boldsymbol{\phi}).
\end{equation}
\textit{Example:}
\begin{align}
x_{1,k+1} &= x_{1,k} + \Delta t\, x_{2,k},\\
x_{2,k+1} &= x_{2,k} + \Delta t\,\Big( \underbrace{\tfrac{1}{m}(w_k - k x_{1,k})}_{\text{known}} - \underbrace{\tfrac{1}{m}\,\hat f_{\mathrm{damp}}(x_{1,k},x_{2,k};\boldsymbol{\phi})}_{\text{unknown}} \Big).
\end{align}

\textbf{Step 5: Design the HRPINN architecture.} Use a recurrent cell with state $\mathbf{h}_k\approx\mathbf{x}_k$ and numerical update
\begin{equation}
\mathbf{h}_{k+1}=\mathbf{h}_k + \Delta t\,\big( \mathbf{f}_{\mathrm{phys}}(\mathbf{h}_k,\mathbf{w}_k,\mathbf{p}_{\mathrm{known}},t_k) + \hat{\mathbf{f}}_{\boldsymbol{\phi}}(\mathbf{h}_k,\mathbf{w}_k,t_k) \big).
\end{equation}

\textbf{Step 5b (for PHRPINN): Add a Projection Layer.} If the system is governed by implicit algebraic invariants $\mathbf{g}(\mathbf{x})=\mathbf{0}$, augment the recurrent cell with a projection layer. After each integration step, this layer corrects the predicted state by solving the KKT system to find the closest point on the constraint manifold.

\textbf{Step 6: Calibrate the model.} Train by minimizing sequence loss against observed data using BPTT; optionally add penalties (e.g., parameter positivity) and regularization. A typical objective is $\mathcal{L}=\sum_k \ell_{\mathrm{data}}(\mathbf{h}_k,\mathbf{y}_k)+\lambda\,\mathcal{R}(\boldsymbol{\phi})$, where $\mathcal{R}(\boldsymbol{\phi})$ is a regularization term (e.g., L2 weight decay on network parameters $\boldsymbol{\phi}$) and $\lambda$ is a regularization hyperparameter.

\textbf{Step 7: Use the model for inference.} Deploy the trained HRPINN for prediction under new inputs. Longer-timescale parameter evolution is outside this scope, but using a separate model to update the parameters of the system reflecting complex degradation patterns is an interesting future work.

\subsection{Complexity and Gradient Strategy}

We briefly analyze the computational complexity and memory requirements of the gradient computation strategies:

\begin{itemize}
  \item \textbf{Unrolled BPTT:} Memory $\mathcal{O}(T\cdot n)$ to store states/activations for the backward pass; backward cost is a constant-factor multiple of the forward cost. This is the standard approach for HRPINN.
  \item \textbf{Implicit Projection Differentiation:} Memory $\mathcal{O}(n)$ (no unrolling of the projection solve) but requires solving a symmetric indefinite linear system per step in the backward pass. Dense factorization costs $\mathcal{O}((n+m)^3)$, while iterative or sparse solvers can reduce this to an effective cost of $\mathcal{O}((n+m)\kappa)$, where $\kappa$ is the condition number.
  \item \textbf{Practice:} For PHRPINN, use unrolled BPTT for the integrator part and implicit gradients for the projection part. For systems with long horizons or computationally expensive projections, checkpointing or other memory-saving BPTT variants can be combined with implicit gradients for the projection. Exploiting sparsity in the constraint Jacobian $G$ is crucial for performance on large-scale problems.
\end{itemize}

\subsection{Practical Considerations and Architectural Choice}
The choice between the HRPINN and PHRPINN architectures involves critical trade-offs between computational cost, implementation complexity, and the nature of the system's constraints.

\paragraph{Trade-offs and Costs}
While HRPINN provides a lightweight and efficient way to enforce known ODEs (exactly because it avoids the computationally expensive projection), it offers no mechanism for implicit algebraic invariants. PHRPINN closes this gap by guaranteeing constraint satisfaction, but at a cost. The projection step introduces significant computational overhead, requiring the solution of a KKT system (or its linear approximation) at every time step, both in the forward pass and during backpropagation via implicit differentiation. This can increase training and inference time substantially compared to a standard HRPINN.

\paragraph{Dependency on Model Fidelity}
Furthermore, the effectiveness of both architectures relies on the quality of the known physical model, $\mathbf{f}_{\mathrm{phys}}$. By hard-coding these physics, the framework assumes they are a reasonably accurate representation. If the base model $\mathbf{f}_{\mathrm{phys}}$ is significantly misspecified, this can hinder the learning of the true residual dynamics. This issue is potentially \textbf{more acute for PHRPINN}, as it not only integrates potentially incorrect dynamics but also projects onto a constraint manifold $\mathcal{M}=\{\mathbf{x} \mid \mathbf{g}(\mathbf{x})=\mathbf{0}\}$ that may itself be derived from the flawed physics. Rigidly enforcing consistency with an incorrect manifold can be more detrimental than simply integrating slightly incorrect dynamics, potentially leading to a poorer overall model fit.

\paragraph{Guidance for Practitioners}
Based on these considerations, we propose the following guidance:
\begin{itemize}
    \item \textbf{Use HRPINN when:} The system is governed by ODEs or DAEs with only \textbf{explicit} algebraic constraints that can be solved within the integrator step. It is the ideal choice when the primary goal is residual learning without needing to enforce implicit, long-horizon invariants.
    \item \textbf{Use PHRPINN when:} The system has \textbf{implicit} algebraic invariants (e.g., conservation of energy, momentum) that must be strictly satisfied to prevent non-physical long-term drift. It is necessary when physical consistency is paramount, and the additional computational cost is acceptable for the application.
\end{itemize}

\section{Case Studies}
\label{sec:casestudies}

To validate our proposed frameworks, we designed experiments to address five key validation objectives (VOs) that probe their core capabilities, theoretical conjectures, robustness, and practical utility:

\begin{itemize}
    \item \textbf{VO1:} To evaluate the effectiveness of the core HRPINN architecture at handling explicit invariants in a real-world DAE system.
    \item \textbf{VO2:} To assess the data efficiency of the HRPINN architecture and its ability to produce high-fidelity nominal models for complex, partially observed systems.
    \item \textbf{VO3:} To determine if the PHRPINN extension strictly enforces implicit invariants while maintaining superior accuracy over baselines, and to analyze the associated trade-offs.
    \item \textbf{VO4:} To investigate the robustness of the PHRPINN framework, its effect on the optimization landscape, and its generalization from finite data.
    \item \textbf{VO5:} To verify whether hard-constrained projection is necessary for ensuring physical consistency in systems with implicit invariants, compared to soft-constrained alternatives.
\end{itemize}

We address these objectives through two complementary case studies. \textbf{Case Study 1} uses a real-world Lithium-ion battery prognostic task to validate the core HRPINN architecture in a partially observed DAE setting (addressing VO1 \& VO2). \textbf{Case Study 2} uses a suite of standard constrained dynamical systems to rigorously evaluate the PHRPINN extension, focusing on its ability to enforce implicit algebraic invariants and ensure long-term stability (addressing VO3, VO4, \& VO5).

\subsection{Case Study 1: Lithium-Ion Battery Prognostics}
\label{sec:paper_case_battery}

This first case study, inspired by previous work on the topic \citep{Nascimento2023-nk}, focuses on a Lithium-ion battery, a common component in many Cyber-Physical Systems where predicting performance degradation is critical.

\subsubsection{System, Data, and Preprocessing}
We use the NASA PCoE Li-ion battery collection \citep{Dataset11}, a standard public dataset for prognostics research. To focus on modeling the nominal (healthy) behavior of the cells, we utilize the first two discharge cycles from all available batteries. These initial cycles are ideal as they exhibit minimal effects from the complex degradation mechanisms that appear later in a battery's life. The task is to predict the terminal voltage $V(t)$ over a full discharge cycle given the applied current profile $i_{\text{app}}(t)$ and the cell temperature $T(t)$.

Each full discharge profile is treated as an independent time-series for training and evaluation. For data preprocessing, cycles were identified according to the protocol from the data provider. To ensure data quality, only complete discharge cycles were used, and any cycles with clear data acquisition errors or anomalies were discarded. Anomalies were identified as cycles containing non-monotonic voltage drops during discharge, significant sensor noise, or incomplete data logs, as determined by automated scripts and visual inspection.

The raw data is used directly without normalization. This choice is deliberate to preserve the physical scale of the variables, which is important for the physics-based components of the models. 
We acknowledge that this lack of normalization presents a significant challenge, particularly for the purely data-driven NODE and PINN baselines, which are known to benefit from scaled inputs. However, we maintain this approach for methodological consistency and to ensure a direct and unambiguous comparison. By providing all models with the exact same raw, unscaled data, we are explicitly testing the inherent robustness of each \textit{architecture} to the varying magnitudes of real-world physical quantities, rather than the effectiveness of a model-specific preprocessing pipeline.

\subsubsection{Governing Equations}
The physical model for the battery is a semi-explicit index-1 Differential-Algebraic Equation (DAE) system, adapted from the literature \citep{nasabatterymodel}. This structure is common in electrochemical modeling, where some states evolve dynamically according to differential laws (e.g., charge accumulation) while others are linked through instantaneous algebraic relationships (e.g., potentials and reaction kinetics). 

The system can be formally expressed as:

\begin{align}
\dot{\mathbf{x}}(t) &= f(\mathbf{x}(t), \mathbf{z}(t), w(t)) \\
\mathbf{0} &= g(\mathbf{x}(t), \mathbf{z}(t), w(t))
\end{align}

where $\mathbf{x}(t)$ is the vector of differential state variables, $\mathbf{z}(t)$ is the vector of algebraic variables, and $w(t) = i_{app}(t)$ is the external input current. For this specific model, the differential state vector $\mathbf{x}(t)$ is defined by the states in Eq. \ref{eq:paper_battery_differential} (e.g., $q_{b,p}, V'_o$), while the algebraic vector $\mathbf{z}(t)$ consists of all instantaneous variables defined by the constraints in Eq. \ref{eq:paper_battery_algebraic} (e.g., $V_{U,i}, J_i, C_{b,i}$).
The system is designated as index-1 because the algebraic constraints can be directly solved for the algebraic variables $\mathbf{z}$ at any time $t$, given the values of the differential variables $\mathbf{x}$ at that same time, without needing to differentiate the constraint equations. Here, the function $\mathbf{g}(\cdot)$ refers to the complete set of algebraic constraints presented in Eq. \ref{eq:paper_battery_algebraic}. This is formally guaranteed because the Jacobian of $\mathbf{g}$ with respect to the algebraic variables $\mathbf{z}$ ($\partial \mathbf{g} / \partial \mathbf{z}$) is non-singular, a standard condition for index-1 DAEs that ensures a unique solution for $\mathbf{z}$ exists given $\mathbf{x}$ \citep{brenan1995numerical}.

The differential state vector is defined as $\mathbf{x} = [q_{b,p}, q_{s,p}, q_{b,n}, q_{s,n}, V'_o, V'_{\eta,p}, V'_{\eta,n}]^T$. These states represent the charge in the bulk ($q_b$) and surface ($q_s$) regions of the positive (p) and negative (n) electrodes, along with filtered versions of the ohmic voltage drop ($V'_o$) and electrode overpotentials ($V'_{\eta}$). The governing differential equations, which define the function $f(\cdot)$, are:
\begin{equation}
\label{eq:paper_battery_differential}
\begin{aligned}
    \dot{q}_{s,p} &= i_{app} + \dot{q}_{b,s,p}, & \dot{q}_{b,p} &= -\dot{q}_{b,s,p} \\
    \dot{q}_{s,n} &= -i_{app} + \dot{q}_{b,s,n}, & \dot{q}_{b,n} &= -\dot{q}_{b,s,n} \\
    \dot{V}_o' &= (V_o - V_o')/\tau_o, & \dot{V}_{\eta,p}' &= (V_{\eta,p} - V_{\eta,p}')/\tau_{\eta,p}, \\ \dot{V}_{\eta,n}' &= (V_{\eta,n} - V_{\eta,n}')/\tau_{\eta,n}
\end{aligned}
\end{equation}

The algebraic variables $\mathbf{z}$ are determined by the set of algebraic constraints $g(\cdot)=0$.
A key component of this model is the Redlich-Kister (RK) expansion for the term $V_{INT,i}$, which accounts for the non-ideal thermodynamic behavior of the electrodes. This term is particularly important as it represents the complex physical component that we treat as unknown and replace with a neural network in our HRPINN implementation. The full set of algebraic constraints is:
\begin{equation}
\label{eq:paper_battery_algebraic}
\begin{aligned}
    &V - (V_{U,p} - V_{U,n} - V_o' - V'_{\eta,p} - V'_{\eta,n}) = 0 \\
    &V_{U,i} - \left( U_{0,i} + \frac{RT}{nF} \ln \left( \frac{1-x_i}{x_i} \right) + V_{INT,i} \right) = 0, \quad i \in \{p,n\} \\
    &V_{INT,i} - \frac{1}{nF} \sum_{k=0}^{N_i} A_{i,k} \left( (2x_i - 1)^{k+1} \right. \\
    & \qquad \left. - \frac{k(2x_i-1) - (x_i-1)}{(2x_i - 1)^{1-k}} \right) = 0, \quad i \in \{p,n\} \\
    &V_{\eta,i} - \frac{2RT}{F} \operatorname{arcsinh} \left( \frac{J_i}{2J_{i0}} \right) = 0, \quad i \in \{p,n\} \\
    &J_i - \frac{i_{app}}{S_i} = 0, \quad i \in \{p,n\} \\
    &J_{i0} - k_i (1-x_{s,i})^\alpha (x_{s,i})^{1-\alpha} = 0, \quad i \in \{p,n\} \\
    &V_o - i_{app}R_o = 0 \\
    &\dot{q}_{b,s,i} - \frac{1}{D}(C_{b,i} - C_{s,i}) = 0, \quad i \in \{p,n\} \\
    &C_{b,i} - \frac{q_{b,i}}{v_{b,i}} = 0, \quad i \in \{p,n\} \\
    &C_{s,i} - \frac{q_{s,i}}{v_{s,i}} = 0, \quad i \in \{p,n\} \\
    &x_i - \frac{q_{b,i} + q_{s,i}}{q_{max}} = 0, \quad i \in \{p,n\} \\
    &x_{s,i} - \frac{q_{s,i}}{q_{max,s,i}} = 0, \quad i \in \{p,n\}
\end{aligned}
\end{equation}
In the above, the index $i$ denotes the electrode, either positive ($p$) or negative ($n$), and the final constraint assumes a surface-based definition for the state of charge. A comprehensive list of all symbols and their definitions is provided in Table \ref{tab:paper_battery_notation_defs}. The specific parameter values used in this study, listed in Table \ref{tab:paper_battery_params_values}, are taken from the NASA Prognostics Center of Excellence data set \citep{Dataset11}.

\subsubsection{Model Implementations for the Battery Case Study}
To evaluate the proposed HRPINN architecture, we compare it against several established baselines. For this specific DAE system, the models are defined as follows:

\paragraph{HRPINN (Proposed)} The HRPINN model is implemented as a recurrent cell whose state corresponds to the differential state vector $\mathbf{x}$. The known physics are hard-coded, while the unknown or complex components are replaced by neural networks. Specifically, the computationally expensive Redlich-Kister expansion for the non-ideal thermodynamic voltage, $V_{INT,i}$, is replaced by two separate MLPs, $\hat{V}_{INT,p}(\mathbf{x};\theta_p)$ and $\hat{V}_{INT,n}(\mathbf{x};\theta_n)$.
A carefully designed MLP architecture is not only capable of accurately approximating this complex term but can also be significantly more computationally efficient at inference time than evaluating the full, high-order polynomial expansion. The cell's update rule is a numerical DAE solver (e.g., Forward Euler) that enforces the known physics by design. This architecture hard-codes the trusted physics and focuses the learning task on the complex, non-ideal residual terms.

\paragraph{Neural ODE (NODE)} As a black-box baseline, the NODE learns the entire system dynamics from data. It uses the same recurrent integrator structure as HRPINN, but the right-hand side of the differential equation is replaced entirely by a neural network $\hat{f}_{\theta}(\mathbf{x}, w)$. It has no explicit knowledge of the underlying electrochemical laws and must learn them implicitly from the data.

\paragraph{Physics-Informed Neural Network (PINN)} The PINN baseline uses a single neural network $\hat{\mathbf{x}}_{\theta}(t)$ to directly approximate the trajectory of the differential states. It is trained by minimizing a composite loss function that penalizes deviations from both the observed data and the governing equations:
\begin{equation}
    \mathcal{L}_{PINN} = \mathcal{L}_{data} + \lambda_{diff} \mathcal{L}_{diff} + \lambda_{alg} \mathcal{L}_{alg}
\end{equation}
where $\mathcal{L}_{data}$ measures the mismatch with observations, $\mathcal{L}_{diff}$ penalizes the residual of the differential equations (Eq. \ref{eq:paper_battery_differential}), and $\mathcal{L}_{alg}$ penalizes the residual of the algebraic constraints (Eq. \ref{eq:paper_battery_algebraic}). The derivatives required for $\mathcal{L}_{diff}$ are computed via automatic differentiation.

\subsubsection{Discussion on Constraint Enforcement and Partial Observability}
A key challenge in this case study is that only the input current and temperature, and the output voltage are measured. The vast majority of the system's state (charges, concentrations, overpotentials) is latent. This scenario highlights a fundamental strength of the HRPINN architecture.
\begin{itemize}
    \item \textbf{HRPINN:} The recurrent structure is inherently suited for this task. The hidden state of the RNN naturally tracks the evolution of the full, unobserved physical state vector $\mathbf{x}$. The model is trained by comparing its final voltage computation with the measured data, and the error is backpropagated through time to correct the dynamics of the latent states. The explicit algebraic constraints are enforced by design within each forward pass of the recurrent cell, posing no additional difficulty.
    \item \textbf{NODE and PINN:} These baselines face a challenge. Lacking the architectural prior of the known physics, the NODE must infer the entire dynamics of the latent states from scratch. The PINN faces an additional challenge: it must produce a continuous-time trajectory for all latent states that simultaneously satisfies the soft-constrained physics loss while matching the sparse data loss on the voltage. This can lead to non-physical solutions for the latent states that happen to produce the correct output.
    \item \textbf{Framework Flexibility:} This case study demonstrates the flexibility of our proposed framework. The battery model is a semi-explicit DAE, a system whose explicit algebraic constraints are handled gracefully by the standard HRPINN architecture. Projection methods (i.e., PHRPINN), and by extension other projection-based models like PNODE, are not needed here and would be both inefficient and conceptually misaligned.
    Projection is designed to enforce implicit invariants on the differential states (e.g., energy conservation, $g(\mathbf{x})=0$). In this battery model, however, the algebraic constraints define the relationship between the differential states $\mathbf{x}$ and the algebraic states $\mathbf{z}$ ($g(\mathbf{x}, \mathbf{z}, w)=0$). The HRPINN's recurrent cell, acting as a numerical DAE solver, already computes the correct algebraic states at each time step to satisfy these constraints by construction. Applying an additional projection step would introduce significant computational overhead (solving a KKT system) to enforce constraints that are already respected by the model's design. This highlights that our framework correctly distinguishes between systems needing only hard-constrained dynamics (HRPINN) and those with implicit invariants that require projection for stability (PHRPINN), offering a tailored solution for different classes of physical systems.
\end{itemize}

\subsubsection{Utility for Prognostics}
Modeling the healthy behavior of a battery with high fidelity is a crucial first step for many real-world prognostic tasks. A robust model of the nominal system serves as a baseline against which to detect and quantify deviations caused by aging and degradation. For instance, this model can be directly used for anomaly detection, where the residual (the difference between the model's voltage prediction and the measured voltage) acts as a sensitive indicator of off-nominal behavior. While purely data-driven methods, such as autoencoders, are also powerful tools for this type of anomaly detection, our physics-informed approach provides a residual that is
potentially more interpretable, as deviations can be linked back to the underlying physical model. Furthermore, this residual is not just a simple error signal; it is a rich, informative feature that can be fed into downstream data-driven models for predicting Remaining Useful Life (RUL) or End of Life (EOL). By effectively filtering out the complex but healthy dynamics, the residual naturally highlights the subtle effects of degradation, making it a powerful input for machine learning algorithms tasked with long-term prediction.

\begin{table}[ht!]
\centering
\scriptsize 

\begin{minipage}[t]{0.54\textwidth}
    \vspace{0pt} 
    \caption{Notation and Definitions}
    \label{tab:paper_battery_notation_defs}
    \centering
    \begin{tabular}{|c|p{5.8cm}|}
    \hline
    \textbf{Symbol} & \textbf{Definition} \\ \hline
    $V_{U,i}$ & Equilibrium potential of electrode $i$ \newline ($n$: negative, $p$: positive) \\
    $U_0$ & Reference potential \\
    $R$ & Universal gas constant \\
    $T$ & Electrode temperature \\
    $n$ & Electrons transferred ($n=1$ for Li-ion) \\
    $F$ & Faraday's constant \\
    $x_i$ & Mole fraction of Li in electrode $i$ \\
    $V_{INT,i}$ & Activity correction term (Redlich-Kister) \\
    $N_i$ & Number of terms in R-K expansion \\
    $A_{i,k}$ & Fitting parameters in R-K expansion \\
    $J_i$ & Current density at electrode $i$ \\
    $\alpha$ & Symmetry factor \\
    $V_{\eta,i}$ & Overpotential at electrode $i$ \\
    $J_{i0}$ & Exchange current density at electrode $i$ \\
    $i_{app}$ & Applied electrical current \\
    $S_i$ & Surface area of electrode $i$ \\
    $k_i$ & Lumped parameter for electrode $i$ \\
    $x_{s,i}$ & Mole fraction of Li at surface \\
    $V$ & Overall battery voltage \\
    $V_o$ & Voltage drop due to internal resistance \\
    $R_o$ & Total internal resistance \\
    $q_i$ & Amount of Li ions in electrode $i$ (C) \\
    $q_{max}$ & Total available (mobile) Li ions (C) \\
    $C_{b,i}$ & Conc. of Li ions in bulk (mol/m$^3$) \\
    $C_{s,i}$ & Conc. of Li ions at surface (mol/m$^3$) \\
    $q_{b,i}$ & Amount of Li ions in bulk (mol) \\
    $q_{s,i}$ & Amount of Li ions at surface (mol) \\
    $v_{b,i}$ & Volume of bulk region (m$^3$) \\
    $v_{s,i}$ & Volume of surface region (m$^3$) \\
    $\dot{q}_{b,s,i}$ & Diffusion rate bulk-to-surface (mol/s) \\
    $D$ & Diffusion constant (m$^2$/s) \\
    $\tau_o$ & Time constant (internal resistance) \\
    $\tau_{\eta,i}$ & Time constant (overpotential) \\ \hline
    \end{tabular}
\end{minipage}%
\hfill 
\begin{minipage}[t]{0.44\textwidth}
    \vspace{0pt} 
    \caption{Battery Parameters (NASA PCoE)}
    \label{tab:paper_battery_params_values}
    \centering

    \textbf{(a) Redlich-Kister Expansion}
    \vspace{2pt}

    \begin{tabular}{ll}
    \hline
    \textbf{Param} & \textbf{Value} \\ \hline
    $U_{0,p}$ & 4.03 V \\
    $A_{p,0}$ & -33642.23 J/mol \\
    $A_{p,1}$ & 0.11 J/mol \\
    $A_{p,2}$ & 23506.89 J/mol \\
    $A_{p,3}$ & -74679.26 J/mol \\
    $A_{p,4}$ & 14359.34 J/mol \\
    $A_{p,5}$ & 307849.79 J/mol \\
    $A_{p,6}$ & 85053.13 J/mol \\
    $A_{p,7}$ & -1075148.06 J/mol \\
    $A_{p,8}$ & 2173.62 J/mol \\
    $A_{p,9}$ & 991586.68 J/mol \\
    $A_{p,10}$ & 283423.47 J/mol \\
    $A_{p,11}$ & -163020.34 J/mol \\
    $A_{p,12}$ & -470297.35 J/mol \\
    $U_{0,n}$ & 0.01 V \\
    $A_{n,0}$ & 86.19 J/mol \\ \hline
    \end{tabular}

    \vspace{0.5cm} 

    \textbf{(b) Physical Parameters}
    \vspace{2pt}

    \begin{tabular}{ll}
    \hline
    \textbf{Param} & \textbf{Value} \\ \hline
    $q_{\max}$ & $1.32 \times 10^4$ C \\
    $R$ & $8.314$ J/mol/K \\
    $F$ & $96487$ C/mol \\
    $n$ & $1$ \\
    $D$ & $7.0 \times 10^6$ mol s/C/m$^3$ \\
    $\tau_o$ & $10$ s \\
    $\alpha$ & $0.5$ \\
    $R_o$ & $0.085$ $\Omega$ \\
    $S_p$ & $2 \times 10^{-4}$ m$^2$ \\
    $k_p$ & $2 \times 10^4$ A/m$^2$ \\
    $v_{s,p}$ & $2 \times 10^{-6}$ m$^3$ \\
    $v_{b,p}$ & $2 \times 10^{-5}$ m$^3$ \\
    $\tau_{\eta,p}$ & $90$ s \\
    $S_n$ & $2 \times 10^{-4}$ m$^2$ \\
    $k_n$ & $2 \times 10^4$ A/m$^2$ \\
    $v_{s,n}$ & $2 \times 10^{-6}$ m$^3$ \\
    $v_{b,n}$ & $2 \times 10^{-5}$ m$^3$ \\
    $\tau_{\eta,n}$ & $90$ s \\ \hline
    \end{tabular}
\end{minipage}

\end{table}

\subsection{Case Study 2: Standard Constrained Benchmark Systems}
To demonstrate the generality of the PHRPINN architecture, we evaluate it on a suite of standard constrained dynamical systems. We adopt the benchmarks used in the PNODE paper \citep{pal2025semi} to provide a direct comparison and to gain deeper insights into the capabilities and trade-offs of our hybrid, residual-learning approach when combined with projection.

\subsubsection{Systems and Data}
We adopt the six benchmark systems from Pal et al. \citep{pal2025semi}, summarized in Table \ref{tab:benchmark_systems}. These systems cover a range of dynamics and constraint types, including the Lotka-Volterra, mass-spring, two-body, nonlinear spring, planar robot arm, and rigid body systems, and serve as a standard for evaluating constrained dynamics learning. Data is generated by integrating the ground-truth equations. To ensure a direct and fair comparison, we adopt the exact data generation parameters, integration settings, and evaluation horizons specified in the PNODE methodology \citep{pal2025semi}. A projected integration scheme is used to ensure all ground-truth trajectories strictly satisfy their respective algebraic invariants.

\subsubsection{Model Implementations}
For these benchmarks, we compare a comprehensive suite of models to systematically analyze the contributions of residual learning and hard constraint enforcement. To create the residual learning task for the grey-box models (HRPINN, PHRPINN), the full dynamics $\mathbf{f}$ of each benchmark were separated into a known component $\mathbf{f}_{\text{phys}}$ (the prior) and an unknown residual $\mathbf{f}_{\text{unk}}$ (the learning target). This separation, detailed in Table~\ref{tab:benchmark_priors}, follows a logical approach: for mechanical systems, we separate known kinematics from the unknown force-based dynamics, and for coupled systems, we separate the uncoupled terms from the coupling terms. This allows the grey-box models to leverage partial physical knowledge, while the black-box models (NODE, PNODE) must learn the entire dynamics $\mathbf{f} = \mathbf{f}_{\text{phys}} + \mathbf{f}_{\text{unk}}$ from data.
\begin{itemize}
    \item \textbf{NODE:} A black-box recurrent model that learns the entire dynamics function with a single neural network. It has no knowledge of the physics or the constraint.
    \item \textbf{PINN:} A standard baseline that uses a single MLP to represent the trajectory. It is trained with soft penalties for both the differential equations and the algebraic constraints.
    \item \textbf{HRPINN (Unconstrained):} A grey-box model that hard-codes the known part of the ODEs and learns the unknown residual. It is not made aware of the algebraic constraint.
    \item \textbf{HRPINN (Soft-Constrained):} The same grey-box HRPINN, but with an additional soft penalty term in its loss function to encourage satisfaction of the algebraic constraint.
    \item \textbf{PNODE:} A black-box model (identical to NODE) augmented with a projection layer that enforces the algebraic constraint after each integration step.
    \item \textbf{PHRPINN (Proposed):} Our proposed grey-box architecture that combines the residual learning of HRPINN with the hard-enforced projection layer of PNODE.
\end{itemize}

\subsubsection{Discussion}
The goal of this case study is to validate the PHRPINN framework as a robust and general-purpose tool. By comparing this full suite of models, we can systematically dissect the benefits of different modeling choices. The comparison between HRPINN and NODE isolates the advantage of residual learning; the comparison between soft-constrained HRPINN and PHRPINN highlights the effectiveness of hard projection over soft penalties; and the comparison between PHRPINN and PNODE demonstrates the value of combining residual learning with projection. Demonstrating strong performance across these diverse benchmarks validates that our proposed architecture is not narrowly tailored to a single problem but is a flexible tool applicable to a wide class of constrained dynamical systems.

\begin{table*}[ht!]
\centering
\caption{Benchmark constrained dynamical systems, adapted from \citep{pal2025semi}. Invariants are defined for total energy ($E$), angular momentum ($L$), a conserved Lotka-Volterra quantity ($V$), and general constraints ($C$). State variables $\mathbf{q}$ and $\mathbf{p}$ denote generalized position and momentum, respectively.}
\label{tab:benchmark_systems}
\small
\begin{tabular}{@{}lll@{}}
\toprule
\textbf{System} & \textbf{ODE System} & \textbf{Algebraic Invariant(s)} \\ \midrule
Lotka-Volterra & $\dot{x}=\alpha x-\beta x y$, $\dot{y}=\delta x y-\gamma y$ & $V(x,y)=\delta x-\gamma\ln x+\beta y-\alpha\ln y=V_0$ \\
Mass-Spring & $\dot{x}=v$, $\dot{v}=-x$ & $E(v,x)=\tfrac{1}{2}(x^2+v^2)=E_0$ \\
Two-Body & $\dot{\mathbf{q}}=\mathbf{p}$, $\dot{\mathbf{p}}=-\mathbf{q}/\|\mathbf{q}\|^3$ & $L(\mathbf{q},\mathbf{p})=q_1 p_2 - q_2 p_1 = L_0$ \\
Nonlinear Spring & $\dot{x}=u, \dot{u}=-x(x^2+y^2)$ & $E=\tfrac{1}{2}(u^2+v^2)+\tfrac{1}{4}(x^2+y^2)^2=E_0$ \\
& $\dot{y}=v, \dot{v}=-y(x^2+y^2)$ & $L=xv-yu=L_0$ \\
Planar Robot Arm & $\dot{\theta}=e'(\theta)^\top (e' e'^\top)^{-1}\dot{p}(t)$ & $C(\theta,t)=e(\theta)-p(t)=\mathbf{0}$ \\
Rigid Body & $\dot{\mathbf{y}} = \mathbf{y} \times (I^{-1}\mathbf{y})$ & $C(\mathbf{y})=\tfrac{1}{2}\|\mathbf{y}\|^2 = C_0$ \\ \bottomrule
\end{tabular}
\end{table*}

\begin{table*}[ht!]
\centering
\caption{Separation of full dynamics $\mathbf{f}$ into known physics (prior) $\mathbf{f}_{\text{phys}}$ and unknown residual $\mathbf{f}_{\text{unk}}$ for Case Study 2 benchmarks.}
\label{tab:benchmark_priors}
\small
\resizebox{\linewidth}{!}{%
\begin{tabular}{@{}lll@{}}
\toprule
\textbf{System} & \textbf{Known Physics (Prior) $\mathbf{f}_{\text{phys}}$} & \textbf{Unknown Residual (Learning Target) $\mathbf{f}_{\text{unk}}$} \\ \midrule

Lotka-Volterra & $\begin{aligned} \dot{x} &= \alpha x \\ \dot{y} &= -\gamma y \end{aligned}$ (Uncoupled terms) & $\begin{aligned} \dot{x} &= -\beta x y \\ \dot{y} &= \delta x y \end{aligned}$ (Coupling terms) \\ \addlinespace

Mass-Spring & $\begin{aligned} \dot{x} &= v \\ \dot{v} &= 0 \end{aligned}$ (Kinematics) & $\begin{aligned} \dot{x} &= 0 \\ \dot{v} &= -x \end{aligned}$ (Dynamics / Force) \\ \addlinespace

Two-Body & $\begin{aligned} \dot{\mathbf{q}} &= \mathbf{p} \\ \dot{\mathbf{p}} &= \mathbf{0} \end{aligned}$ (Kinematics) & $\begin{aligned} \dot{\mathbf{q}} &= \mathbf{0} \\ \dot{\mathbf{p}} &= -\mathbf{q}/\|\mathbf{q}\|^3 \end{aligned}$ (Dynamics / Force) \\ \addlinespace

Nonlinear Spring & $\begin{aligned} \dot{x} &= u, \dot{y} = v \\ \dot{u} &= 0, \dot{v} = 0 \end{aligned}$ (Kinematics) & $\begin{aligned} \dot{x} &= 0, \dot{y} = 0 \\ \dot{u} &= -x(x^2+y^2), \dot{v} = -y(x^2+y^2) \end{aligned}$ (Dynamics / Force) \\ \addlinespace

Planar Robot Arm & $\dot{\theta}=e'(\theta)^\top (e' e'^\top)^{-1}\dot{p}(t)$ (Full Kinematic Mapping) & $\mathbf{f}_{\text{unk}} = \mathbf{0}$ \\ \addlinespace

Rigid Body & $\mathbf{f}_{\text{phys}} = \mathbf{0}$ (No clear kinematic separation) & $\dot{\mathbf{y}} = \mathbf{y} \times (I^{-1}\mathbf{y})$ (Full Dynamic) \\ \bottomrule
\end{tabular}
}
\end{table*}

\section{Experiments and Results}
\label{sec:experiments_and_results}

This section empirically validates the HRPINN and PHRPINN frameworks, with experiments designed to systematically address the Validation Objectives (VOs) introduced in Section~\ref{sec:casestudies}. We first evaluate the core HRPINN on the real-world battery DAE (addressing VO1 \& VO2), followed by a comprehensive benchmark of the PHRPINN extension to assess its accuracy, physical consistency, and robustness (addressing VO3, VO4, \& VO5).

\subsection{General Experimental Setup}
This section outlines the common methodology used for training and evaluating all models to ensure a fair and reproducible comparison.

\subsubsection{Design and Principles}
Our experimental design is guided by fairness, simplicity, and reproducibility. All models are trained with a matched computational budget and a comparable number of trainable parameters (within $\pm10\%$). We intentionally use a common, fixed set of hyperparameters for all models rather than performing extensive tuning. While this may not reveal the peak performance of each architecture, we mitigate the risk of coincidental findings by reporting the mean and standard deviation over 25 runs. This provides a robust estimate of the expected performance and reveals the inherent inductive biases of the architectures themselves \footnote{The source code, datasets, and scripts to reproduce all experiments in this paper are available at: https://doi.org/10.5281/zenodo.17580787}.

A key methodological choice in our experiments was to use the raw, physically-scaled data directly for all models. We acknowledge that the normalization of inputs is a critical step for optimizing neural network training. While alternative physics-based normalization strategies exist, such as scaling variables by their known physical limits from component datasheets, its application in physics-informed contexts like PINNs remains complex; best practices often involve a multi-stage pipeline where network inputs are normalized but are then denormalized before being evaluated by the physics-based loss function to ensure the residuals are computed in their correct physical units.
Implementing such a model-specific preprocessing pipeline for the baselines would introduce a significant confounding variable when comparing against our HRPINN architecture, which is fundamentally designed to operate directly on physical values. To ensure a direct and unambiguous comparison focused purely on architectural merit, we adopted a policy of \textbf{methodological simplicity}: all models receive the exact same raw, unscaled data.

This approach not only ensures a direct comparison but also serves as a valuable stress test, evaluating each model's inherent robustness to the varying magnitudes of real-world physical quantities.
As detailed in Section \ref{sec:casestudies}, we compare our models against the Neural ODE (NODE), the Physics-Informed Neural Network (PINN), and the Projected Neural ODE (PNODE). PNODE is a recent baseline by Pal, Rackauckas et al.~\citep{pal2025semi} (from the group that introduced UDEs~\citep{rackauckas2020universal}) which augments a NODE with a projection layer to enforce hard constraints. We evaluate both its Fast and Robust projection variants. Our re-implementation of these baselines is intended for fair comparison within our specific experimental setup; therefore, all baseline results are reported to validate performance \textit{relative to our method} and are not a direct numerical replication of their original papers.

\subsubsection{Training Protocol and Evaluation}
Models were trained for up to 100 epochs for Case Study 1 (CS1) or 500 epochs for Case Study 2 (CS2) using the Adam optimizer with an initial learning rate of $\eta_0=1\mathrm{e}{-3}$ and a \texttt{ReduceLROnPlateau} scheduler. We assess performance on prediction accuracy using Mean Absolute Error (MAE) and Dynamic Time Warping (DTW), physical consistency via mean/max constraint violation, and computational cost (training time).
\subsection{Validating the Core HRPINN Architecture (VO1 \& VO2)}

To answer the first two research questions, we use the real-world battery prognostics case study, which provides a challenging testbed for the core HRPINN architecture on a partially observed DAE system.

\subsubsection{VO1: Explicit Invariant Handling in a Real-World DAE (CS1)}
\paragraph{Protocol.}
We task HRPINN, NODE, and PINN with predicting the terminal voltage curve. The hidden layer sizes were adjusted to ensure all models had a comparable number of trainable parameters ($\approx 10\text{k}$).

\paragraph{Results and Analysis.}
The results, summarized in Table~\ref{tab:cs1_results}, demonstrate a clear performance advantage for HRPINN. The parameter-matched HRPINN-Large model achieves an MAE of \textbf{0.0377}, a \textbf{66\% reduction} compared to the next-best PINN baseline. This level of accuracy is highly significant for prognostics, as a model with lower nominal error can detect smaller deviations, enabling more sensitive and earlier fault detection. This confirms that for a partially observed, semi-explicit DAE, HRPINN's architecture is significantly more effective. Furthermore, the HRPINN-Small model, with only \textbf{68 parameters}, outperforms the $\approx 10\text{k}$ parameter NODE, strongly supporting the data efficiency benefits of embedding known physics. Notably, the inference times for HRPINN-Large and Small are nearly identical; this is because the overall computational cost is dominated by the numerical solver for the hard-coded physics, making the overhead from the larger neural network negligible in comparison.

\begin{table*}[ht!]
\centering
\caption{Performance on Battery Prognostics (CS1). Results are mean $\pm$ std over 25 runs. Best performance is in \textbf{bold}. All models except HRPINN-Small have matched parameter counts ($\approx 10\text{k}$).}
\label{tab:cs1_results}
\resizebox{\linewidth}{!}{%
\begin{tabular}{@{}lccccr@{}}
\toprule
\textbf{Model} & \textbf{MAE (Voltage) $\downarrow$} & \textbf{DTW (Voltage) $\downarrow$} & \textbf{Training Time (s)} & \textbf{Inference Time/Sample (s) $\downarrow$} & \textbf{Parameters} \\ \midrule
\textbf{HRPINN-Large (ours)} & \textbf{0.0377 $\pm$ 0.0010} & \textbf{56.23 $\pm$ 12.95} & 1371.69 $\pm$ 368.29 & 3.5075 $\pm$ 0.4102 & 10,078 \\
HRPINN-Small (ours) & 0.1001 $\pm$ 0.0332 & 353.98 $\pm$ 314.20 & 1021.41 $\pm$ 301.54 & 3.3507 $\pm$ 0.0950 & \textbf{68} \\
PINN (Baseline) & 0.1127 $\pm$ 0.0265 & 458.27 $\pm$ 192.32 & \textbf{248.93 $\pm$ 92.60} & \textbf{0.0099 $\pm$ 0.0008} & 10,174 \\
NODE (Baseline) & 0.1458 $\pm$ 0.0024 & 1042.24 $\pm$ 99.39 & 1121.53 $\pm$ 227.22 & 0.1564 $\pm$ 0.0022 & 10,120 \\ \bottomrule
\end{tabular}%
}
\end{table*}

\subsection{Evaluating PHRPINN on Implicit Invariants (VO3, VO4, \& VO5)}

\subsubsection{Protocol for Standard Benchmarks (CS2)}
To address VO3--VO5, we use the \textbf{Standard Benchmarks (CS2)} for general validation. We compare PHRPINN against PNODE and PINN on long-horizon prediction tasks, evaluating trajectory accuracy, constraint violation, and computational cost. For the conjecture experiments (VO4), we performed 50 independent runs to assess optimization stability and generalization.

\subsubsection{VO3: Core Performance on Standard Benchmarks (CS2)}
\paragraph{Physical Consistency vs. Computational Cost.}
As summarized in Table~\ref{tab:CS2_fused}, the primary benefit of the predict-project architecture is evident in constraint satisfaction. Both \textbf{PHRPINN} and \textbf{PNODE} with fast projection consistently reduce mean constraint violations to the order of $10^{-7}$ (e.g., PNODE on "RigidBody": $1.70\mathrm{e}{-7}$), a vast improvement over soft-constrained models like PINN or a standard Neural ODE without projection, whose violations are orders of magnitude larger. However, this consistency comes at a high computational cost. While PINN models train in seconds (e.g., 5.2s on "MassSpring"), the robust projection variants can take hours (e.g., "PNODE-Robust" taking over 2.7 hours on "MassSpring"). This highlights a critical trade-off between guaranteed consistency and the computational cost of model development and retraining.

Notably, the PNODE-Fast baseline failed on the \texttt{NonlinearSpring} system, producing \texttt{NaN} values (Table~6). This empirical result validates the caveat from the PNODE authors, who warned that their "fast approximation" is not universally stable and that "there are cases where this model doesn't show promising outcomes" \citep{pal2025semi}.

\paragraph{Trajectory Accuracy and Shape Fidelity.}
While PNODE excels at enforcing constraints, \textbf{PHRPINN often achieves superior trajectory accuracy}, demonstrating the value of combining residual learning with projection. On the complex "RobotArm" system, PHRPINN-Fast achieves an MAE of $\mathbf{2.10 \times 10^{-3}}$, outperforming PNODE-Fast ($5.17 \times 10^{-3}$). Furthermore, PHRPINN's ability to capture the qualitative shape of trajectories is highlighted by its strong performance on the DTW metric for systems like "RigidBody" ($\mathbf{0.828}$ vs. PNODE's $1.39$). This suggests that by hard-coding known physics, the neural network can focus its capacity on learning the more subtle, unmodeled dynamics.

\paragraph{Ablation of HRPINN Architectures.}
Our results also provide an ablation study on the HRPINN framework itself. On the "RigidBody" system, a pure residual model with no constraints ("HRPINN\_NoConstraints") achieves a strong MAE of $2.41 \times 10^{-2}$ but has a high mean violation of $4.95 \times 10^{-1}$. Adding soft constraints ("HRPINN\_SoftConstraints") degrades accuracy (MAE $9.01 \times 10^{-2}$) while only marginally improving violations. Only by adding hard projection in the final \textbf{PHRPINN} model do we achieve both top-tier accuracy (MAE $\mathbf{1.86 \times 10^{-2}}$) and significantly improved constraint adherence, justifying the projected architecture.

\begin{table*}[ht!]
\centering
\caption{Comprehensive performance on Standard Benchmarks (CS2). We compare our proposed PHRPINN (Fast Projection) against key baselines across all six systems. Best performance among the three is in \textbf{bold}.}
\label{tab:CS2_fused}
\resizebox{\linewidth}{!}{%
\begin{tabular}{@{}l|ccc|ccc|ccc@{}}
\toprule
\multicolumn{1}{c}{} & \multicolumn{3}{c}{\textbf{PHRPINN-Fast (ours)}} & \multicolumn{3}{c}{\textbf{PNODE-Fast (Baseline)}} & \multicolumn{3}{c}{\textbf{PINN-Dynamics (Baseline)}} \\
\textbf{System} & MAE $\downarrow$ & DTW $\downarrow$ & Mean Viol. $\downarrow$ & MAE $\downarrow$ & DTW $\downarrow$ & Mean Viol. $\downarrow$ & MAE $\downarrow$ & DTW $\downarrow$ & Mean Viol. $\downarrow$ \\ \midrule
MassSpring & $1.10\mathrm{e}{-2}$ & $0.573$ & $5.07\mathrm{e}{-2}$ & $\mathbf{5.54\mathrm{e}{-3}}$ & $\mathbf{0.602}$ & $\mathbf{2.40\mathrm{e}{-7}}$ & $8.77\mathrm{e}{-3}$ & $0.960$ & $9.57\mathrm{e}{-2}$ \\
LotkaVolterra & $6.87\mathrm{e}{-2}$ & $1.38$ & $2.18\mathrm{e}{+0}$ & $\mathbf{6.89\mathrm{e}{-3}}$ & $\mathbf{0.776}$ & $\mathbf{1.90\mathrm{e}{-7}}$ & $9.45\mathrm{e}{-3}$ & $1.03$ & $4.11\mathrm{e}{+0}$ \\
TwoBody & $2.39\mathrm{e}{-1}$ & $38.5$ & $8.16\mathrm{e}{-1}$ & $\mathbf{2.22\mathrm{e}{-1}}$ & $\mathbf{32.1}$ & $\mathbf{1.67\mathrm{e}{-7}}$ & $2.94\mathrm{e}{-1}$ & $30.0$ & $1.43\mathrm{e}{+0}$ \\
NonlinearSpring & $\mathbf{9.69\mathrm{e}{-2}}$ & $\mathbf{16.5}$ & $6.35\mathrm{e}{-1}$ & $3.34\mathrm{e}{-1}$ & $57.7$ & NaN & $1.18\mathrm{e}{+0}$ & $186.0$ & $1.21\mathrm{e}{+0}$ \\
RobotArm & $\mathbf{2.10\mathrm{e}{-3}}$ & $\mathbf{0.311}$ & $1.95\mathrm{e}{+0}$ & $5.17\mathrm{e}{-3}$ & $0.764$ & $\mathbf{3.08\mathrm{e}{-7}}$ & $1.63\mathrm{e}{-1}$ & $18.9$ & $3.97\mathrm{e}{+0}$ \\
RigidBody & $\mathbf{1.86\mathrm{e}{-2}}$ & $\mathbf{0.828}$ & $5.00\mathrm{e}{-1}$ & $2.80\mathrm{e}{-2}$ & $1.39$ & $\mathbf{1.70\mathrm{e}{-7}}$ & $4.29\mathrm{e}{-2}$ & $3.31$ & $9.15\mathrm{e}{-1}$ \\
\bottomrule
\end{tabular}%
}
\end{table*}

\subsubsection{VO4: Framework Robustness and Conjectures (CS2)}

\paragraph{Framework Limitations.}
The stress tests reveal practical limits of the numerical methods employed, particularly when applied to stiff systems with unscaled data. On the \texttt{LotkaVolterra} system, both our PHRPINN-Robust model and our re-implemented PNODE-Robust baseline failed to produce stable results, yielding \texttt{NaN}s. This is a known challenge of this stiff benchmark. Indeed, the authors of PNODE noted that related methods, when tuned for high accuracy, can cause the "dynamics become stiff" and "warrant the use of slower implicit... solver"~\citep{pal2025semi}. We attribute our observed failures to this same underlying numerical instability, which is exacerbated in our setup by the use of explicit-in-time solvers on the \textbf{raw, unscaled physical data}. This finding does not contradict the benefit of physical priors but rather illustrates a crucial interaction: a strong, hard-coded physical model can expose a system's underlying numerical stiffness, which must be handled by a compatible numerical solver. This candidly demonstrates that the choice of model must be tailored to the system's numerical characteristics.

\paragraph{Conjecture 1: Improved Optimization.}
To test the conjecture that hard-coding physics improves the optimization landscape, we compare PHRPINN directly against PNODE. This comparison provides the most direct ablation, as both models use a hard projection layer, thus isolating the effect of the residual-learning prior on the optimization landscape. We analyzed the final training loss across 50 independent runs, and the results, summarized in Table~\ref{tab:conjecture1_full_improved}, strongly support this claim. For 5 out of the 6 benchmark systems, \textbf{PHRPINN achieves a lower mean final loss than PNODE}, often by an order of magnitude. Crucially, it also achieves lower variance and a higher convergence rate on systems like "RigidBody" (1.166 vs 0.990 for PNODE), indicating more reliable and efficient convergence. For instance, on "RobotArm", the standard deviation of PHRPINN's final loss is $\mathbf{4.53 \times 10^{-4}}$, nearly 7 times lower than PNODE's ($3.10 \times 10^{-3}$). This result provides strong empirical support for Conjecture 1, demonstrating that embedding known physics constrains the learning problem to a simpler, better-conditioned residual function that makes optimization both easier and more reliable.

\begin{table}[ht!]
\centering
\caption{Analysis of Optimization Landscape (Conjecture 1). We report the mean final training loss. Lower values suggest a better-conditioned optimization problem. Best performance is in \textbf{bold}.}
\label{tab:conjecture1_full_improved}
\resizebox{0.6\columnwidth}{!}{%
\begin{tabular}{@{} l S[table-format=1.2e-1]
                   S[table-format=1.2e-1]
                   S[table-format=1.2e-1] @{}}
\toprule
\textbf{System} & {\textbf{PHRPINN}} & {\textbf{PNODE}} & {\textbf{PINN}} \\ \midrule
RigidBody       & \bfseries 7.14e-5 & 1.78e-4 & 1.33e-2 \\
RobotArm        & \bfseries 5.64e-3 & 1.03e-1 & 1.48e-1 \\
LotkaVolterra   & \bfseries 3.56e-3 & 2.10e-2 & 2.61e-1 \\
MassSpring      & \bfseries 1.23e-5 & 3.07e-5 & 5.49e-4 \\
NonlinearSpring & {N/A}             & {N/A}   & \bfseries 2.78e-2 \\
TwoBody         & \bfseries 9.55e-3 & 5.62e-2 & 4.48e-2 \\
\bottomrule
\end{tabular}%
}
\end{table}

\paragraph{Conjecture 2: Superior Generalization.}
To test this conjecture, we compare PHRPINN against the standard PINN, as it represents a widely recognized and architecturally distinct (soft-constraint) approach. The results show that the conjecture holds, but its benefits are highly system-dependent. While the strong inductive bias can be a double-edged sword, the most favorable case is shown in Figure~\ref{fig:conjecture2_robotarm} for the complex \textbf{RobotArm} system, where \textbf{PHRPINN trained on only 10\% of the data achieves a lower MSE ($\mathbf{3.49 \times 10^{-7}}$) than a PINN trained on the full dataset ($4.70 \times 10^{-3}$) by several orders of magnitude}. This demonstrates a massive improvement in data efficiency, providing a clear, practical demonstration of the benefits hypothesized in Conjecture 2, where a lower-complexity residual learning task leads to tighter generalization from finite data.

However, this advantage is not universal, as detailed in the complete generalization results in Appendix \ref{app:Aappendix_generalization}, Figure~\ref{fig:appendix_generalization_all}. On simpler systems like "MassSpring", the standard PINN baseline generalized more effectively; its MSE improved by 94\% as data increased, whereas PHRPINN's error degraded significantly. This system-dependent behavior can be explained by analyzing the priors provided in Table~\ref{tab:benchmark_priors}.
For the \textbf{"RobotArm"} system, the "known" physics $\mathbf{f}_{\text{phys}}$ constituted the \textit{entire} complex kinematic mapping, leaving a trivial residual ($\mathbf{f}_{\text{unk}} = \mathbf{0}$) to be learned. This "perfect prior" made the learning task exceptionally simple for PHRPINN, resulting in the massive data efficiency gains observed.
Conversely, for the \textbf{"MassSpring"} system, the prior $\mathbf{f}_{\text{phys}} = [v, 0]$ was minimal, leaving the \textit{entire} system dynamic (the restoring force $\mathbf{f}_{\text{unk}} = [0, -x]$) in the residual. Because the true system is so simple, the structural bias of the HRPINN architecture offered no advantage over the flexibility of a standard PINN; in fact, it appears to have hindered generalization.
This highlights a key insight: the benefit of a physical prior is not just in its presence, but in its \textit{quality} and its ability to reduce the complexity of the residual function that the neural network must learn.

\begin{figure}[ht!]
\centering
\includegraphics[width=0.8\columnwidth]{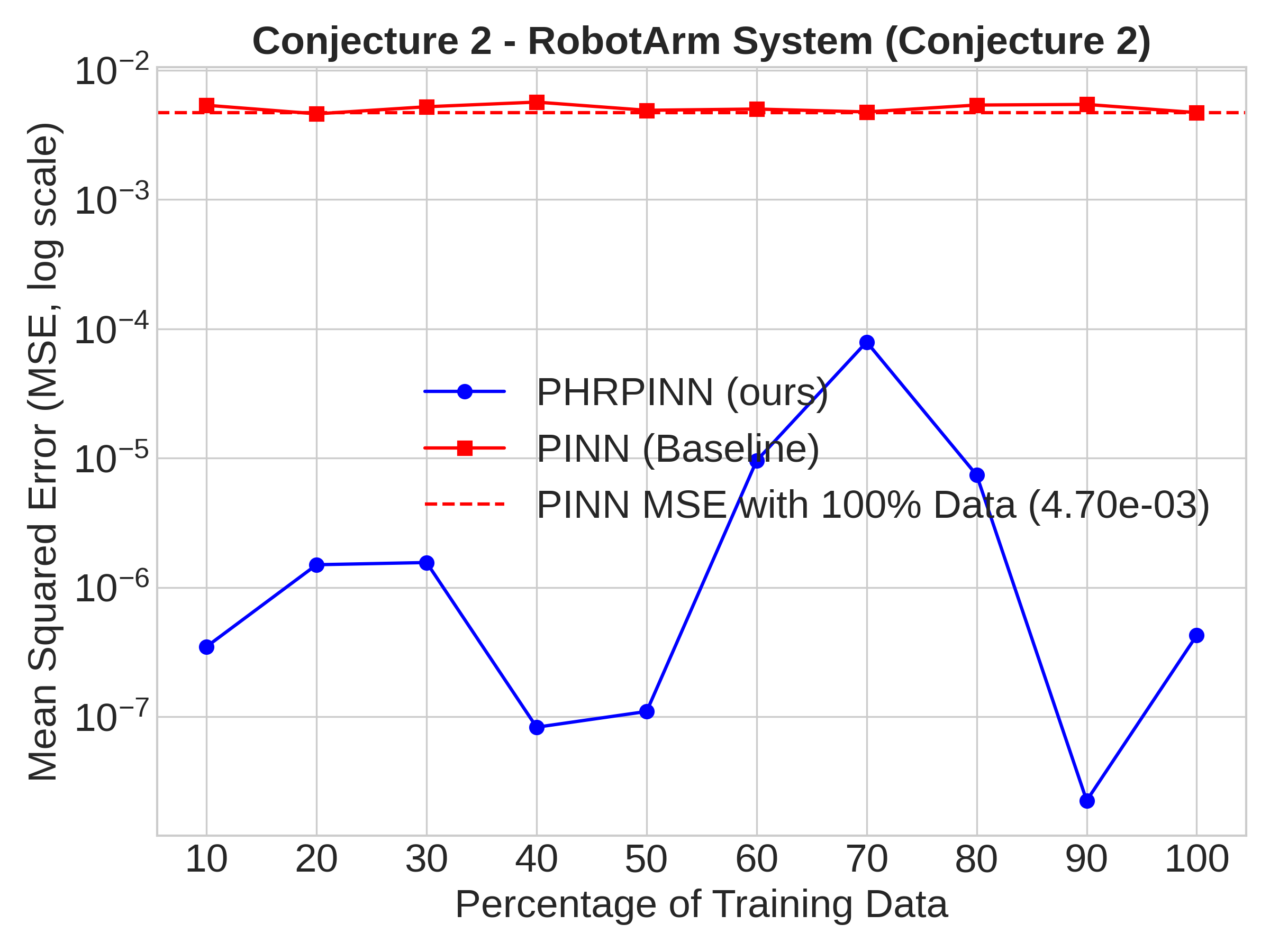}
\caption{Generalization learning curves for the "RobotArm" system (Conjecture 2). PHRPINN (blue) achieves significantly lower MSE than PINN (red), even with a fraction of the training data. The fluctuations in the PHRPINN curve, while appearing large on the logarithmic scale, are characteristic of the training process, arising from the stochastic nature of mini-batch sampling and optimizer dynamics.}
\label{fig:conjecture2_robotarm}
\end{figure}

\section{Discussion and Practical Deployment}

\label{sec:discussion}
The experimental results presented in this paper provide direct insights into the research questions and conjectures posed at the outset. Our findings suggest that the proposed framework, in its HRPINN and PHRPINN variants, offers a viable approach for balancing model fidelity with physical consistency in different classes of dynamical systems. In Case Study 1, the HRPINN architecture demonstrated its effectiveness on a real-world, partially observed DAE, directly addressing \textbf{VO1}. The model's ability to outperform a 150x
larger black-box model (HRPINN-Small vs. NODE) provides a clear
answer to \textbf{VO2}, confirming the data and parameter efficiency of the residual learning approach. This result also provides practical support for \textbf{Conjecture~IV.2}. The 68-parameter HRPINN-Small model, which embeds the known electrochemical laws ($\mathbf{f}_{\mathrm{phys}}$), was tasked with learning only the relatively simple non-ideal thermodynamic residual ($\hat{V}_{INT,i}$). Its ability to outperform a $\approx 10\text{k}$-parameter black-box NODE demonstrates the conjecture's premise: when the known physics are significant and the residual is simple, the hard-constrained approach yields superior data and parameter efficiency. Subsequently, the benchmarks in Case Study 2 answered \textbf{VO3}, showing that the PHRPINN extension successfully enforces implicit invariants to a high degree of precision, albeit with a notable tradeoff between computational cost and the robustness of the projection method.

Our investigation also yielded strong empirical support for the conjectures on the benefits of the hard-constrained architecture (\textbf{VO4}). The results summarized in Table~\ref{tab:conjecture1_full_improved} support \textbf{Conjecture~1}, indicating that by learning a simpler residual function, PHRPINN fosters a more stable and better-conditioned optimization landscape than its black-box counterpart. The evidence for \textbf{Conjecture~2} was more nuanced; while PHRPINN showed a remarkable improvement in generalization on the complex "RobotArm" system (Figure~\ref{fig:conjecture2_robotarm}), its strong inductive bias was less effective on simpler systems, highlighting the system-dependent nature of this benefit. Finally, the ablation experiments on the "RigidBody" system provided a definitive answer to \textbf{VO5}: simple soft-constraint penalties were insufficient for ensuring physical consistency, making the hard projection in PHRPINN a necessary component for achieving both high accuracy and low constraint violation. We re-emphasize that the numerical instabilities observed in some stress tests were carefully investigated and are characteristic of the underlying system dynamics interacting with the chosen solver, not implementation flaws, reinforcing the importance of tailoring the numerical approach to the problem's nature.

These findings inform a more nuanced approach to practical deployment, centered on the entire model lifecycle. \textbf{1) For development,} the high computational cost of the PHRPINN-Robust variant makes it most suitable for creating high-fidelity, offline digital twins where correctness is paramount and development time is less critical. \textbf{2) For deployment,} where inference time is the key metric, the PHRPINN-Fast variant offers a compelling balance for online monitoring or control, providing physical consistency orders of magnitude better than soft methods with a manageable computational load. \textbf{3) For retraining,} systems that require frequent updates may favor simpler architectures like a soft-constrained HRPINN to minimize recurring computational costs, unless the strict guarantees of projection are a hard requirement. While these results validate the framework's potential, our study also illuminated key limitations and clear avenues for future research, which we address next.

\section{Limitations and Future Work}
\label{sec:limitations}
While our results validate the core strengths of the HRPINN/PHRPINN framework, this study also illuminated its limitations, which define clear avenues for future research.

\paragraph{Numerical Stability and Stiffness}
The investigation into \textbf{VO4} revealed that the hard-constrained projection in PHRPINN is not a panacea. We observed numerical instability on certain benchmarks (e.g., NaNs in \texttt{LotkaVolterra}), particularly with the robust solver. This suggests that the projection step, especially when correcting for large prediction errors, can introduce stiffness into the system dynamics. Future work should explore mitigation strategies such as incorporating more advanced or implicit numerical integrators (e.g., Radau or BDF methods) into the recurrent cell, or implementing adaptive projection techniques that blend fast and robust solvers. A key part of this would be developing a lightweight, quantitative criterion (e.g., based on an estimate of the constraint manifold's curvature $||H||$ or the projection residual) to automatically determine when the 'fast' tangent-space approximation is insufficient and the 'robust' solver is required.

\paragraph{Scalability to Large-Scale Systems}
The projection step's computational cost, particularly the $\mathcal{O}((n+m)^3)$ complexity of solving and differentiating through the dense KKT system, poses a potential bottleneck for systems with a very large number of states ($n$) or constraints ($m$). While the tangent-space approximation improves this, scalability remains a challenge. A crucial research direction is to integrate sparse linear algebra and iterative solvers (e.g., Krylov subspace methods) into the differentiable projection layer to efficiently handle large, sparse Jacobians common in fields like finite element analysis or multibody dynamics.

\paragraph{Model Misspecification and Generalization}
As discussed, the framework's performance, and particularly the generalization benefit of residual learning, depends on the known physics $\mathbf{f}_{\mathrm{phys}}$ being a reasonably accurate representation of the system.
The nuanced results for Conjecture 2 highlight that a strong but incorrect inductive bias can be detrimental.
This benefit is also not universal; the conjecture's premise may fail in systems where the unmodeled residual dynamics $\mathbf{f}_{\mathrm{unk}}$ are intrinsically more complex or chaotic than the system's overall observed response. In such cases, where the system's known dynamics effectively filter these complex residuals into a simpler trajectory, learning the full trajectory $\mathbf{x}(t)$ may be an easier task.
This detrimental effect is most pronounced when the structural form of the physical model is incorrect (e.g., omitting a crucial physical effect), as opposed to simply having uncertain parameters.
Future research could explore hybrid approaches where parts of the "known" physics are parameterized and learned, allowing the model to correct for uncertainty or errors in the first-principles model.

\paragraph{Future Research Directions}
Building on these points, we prioritize the following extensions:
\begin{itemize}
    \item \textbf{Higher-Index DAEs:} Adapting the framework to handle higher-index DAEs, which are common in mechanical systems. This would likely require moving beyond simple projection to techniques based on index reduction or stabilized formulations.
    \item \textbf{Exhaustive Benchmarking:} Situating the framework's performance more precisely within the rapidly evolving PIML landscape through rigorous benchmarking against a wider range of architectures, including modern HNN/LNN variants and ALM-based methods.
    \item \textbf{Industrial-Scale Validation:} Validating the framework on large-scale industrial datasets to test its robustness and scalability under real-world conditions of noise, partial observability, and model uncertainty.
    \item \textbf{Knowledge Discovery:} Beyond prediction, using the learned residual function $\hat{\mathbf{f}}_{\boldsymbol{\theta}}$ as a high-integrity model of physical discrepancies. This opens avenues for downstream analysis with techniques like symbolic regression to uncover the analytical forms of unmodeled physics and contribute to scientific understanding.
    \item \textbf{Formal Analysis of Conjectures:} While Conjectures IV.1 (optimization) and IV.2 (generalization) are empirically supported, they lack rigorous proof. Future work could involve a formal analysis of the loss landscape's properties (e.g., the Hessian or condition number) to prove the conditions under which HRPINN is better conditioned. Similarly, a rigorous proof for Conjecture IV.2 would require formally characterizing the conditions under which the residual function class $\mathcal{F}_{unk}$ has a provably lower Rademacher complexity than the full solution class $\mathcal{F}_{traj}$.
\end{itemize}

\section{Conclusion}
\label{sec:conclusion}
To address the identified gap for a framework that embeds known physics, learns only residuals, and strictly enforces algebraic invariants, we formalized HRPINN and introduced its projected extension, PHRPINN. This work formalized their architecture, provided a proof of representational equivalence, and empirically evaluated their performance against established physics-informed baselines. Our experiments demonstrated that the architecture is effective and data-efficient for real-world DAEs (\textbf{VO1}, \textbf{VO2}), can strictly enforce implicit invariants with manageable tradeoffs (\textbf{VO3}, \textbf{VO5}), and exhibits favorable optimization properties (\textbf{VO4}). By presenting and validating this methodology, this work contributes a new, principled approach to the PIML toolkit. We believe this hard-constrained, residual-learning paradigm offers a promising and principled path toward developing the dependable and physically consistent digital twins required for the next generation of safety-critical cyber-physical systems.


\acks{The source code, datasets, and scripts to reproduce all experiments in this paper are available at: \url{https://doi.org/10.5281/zenodo.17580787}. The NASA PCoE Li-ion battery collection used in Case Study 1 is a public dataset available from the NASA PCoE data set repository. This work was supported by FUNDEP under Grant Rota 2030/Linha VI 29271.02.01/2022.01-00; and Grant Rota 2030/Linha VI 29271.03.01/2023.04-00. The authors report there are no competing interests to declare.
}


\newpage



\appendix
\section{Comprehensive Generalization Results}
\label{app:Aappendix_generalization}

This section provides a detailed, system-by-system analysis of the generalization results shown in Figure~\ref{fig:appendix_generalization_all}, which support the discussion of \textbf{Conjecture 2} (Superior Generalization) in the main text. The plots visualize the test Mean Squared Error (MSE) of the proposed \textbf{PHRPINN} against the baseline \textbf{PINN} as the percentage of available training data increases.

The results confirm that the generalization benefit from PHRPINN's strong inductive bias is highly \textbf{system-dependent}. A breakdown of the performance on each benchmark system is provided below.

For the \textbf{RobotArm} system, PHRPINN demonstrates a significant and consistent advantage. Its MSE remains several orders of magnitude lower than the PINN baseline across all training data percentages. This represents the strongest case for improved data efficiency, where hard-coding the known physical model is highly effective.

Conversely, the \textbf{MassSpring} system shows the opposite trend. The PINN baseline consistently outperforms PHRPINN, and its error generally decreases as more training data is provided. PHRPINN's error is both higher and more erratic, suggesting that its strong inductive bias may be a hindrance for this particular system's dynamics.

On the \textbf{RigidBody} benchmark, the results are mixed. While PHRPINN achieves a lower error with very sparse data (20\%), the PINN baseline shows more stable performance and a slightly lower average error across the majority of the data splits. Both models exhibit considerable variance in their performance as data size changes.

Similar to the MassSpring case, the PINN baseline shows a clear performance advantage on the \textbf{LotkaVolterra} system. Its MSE is consistently lower than that of PHRPINN, which does not appear to benefit from additional training data and fails to match the baseline's accuracy.

The \textbf{NonlinearSpring} system presents the most ambiguous results. The performance of both models is highly variable, with their learning curves intersecting multiple times. Neither model establishes a consistent advantage, indicating that both struggle with the system's dynamics or that the generalization behavior is particularly sensitive to the specific training data subsets.

For the \textbf{TwoBody} problem, PINN maintains a modest but consistent advantage over the PHRPINN. While both models operate within a similar error range, PINN's MSE is generally lower across all data percentages.

\begin{figure*}[ht!]
\centering
\includegraphics[width=1.0\textwidth]{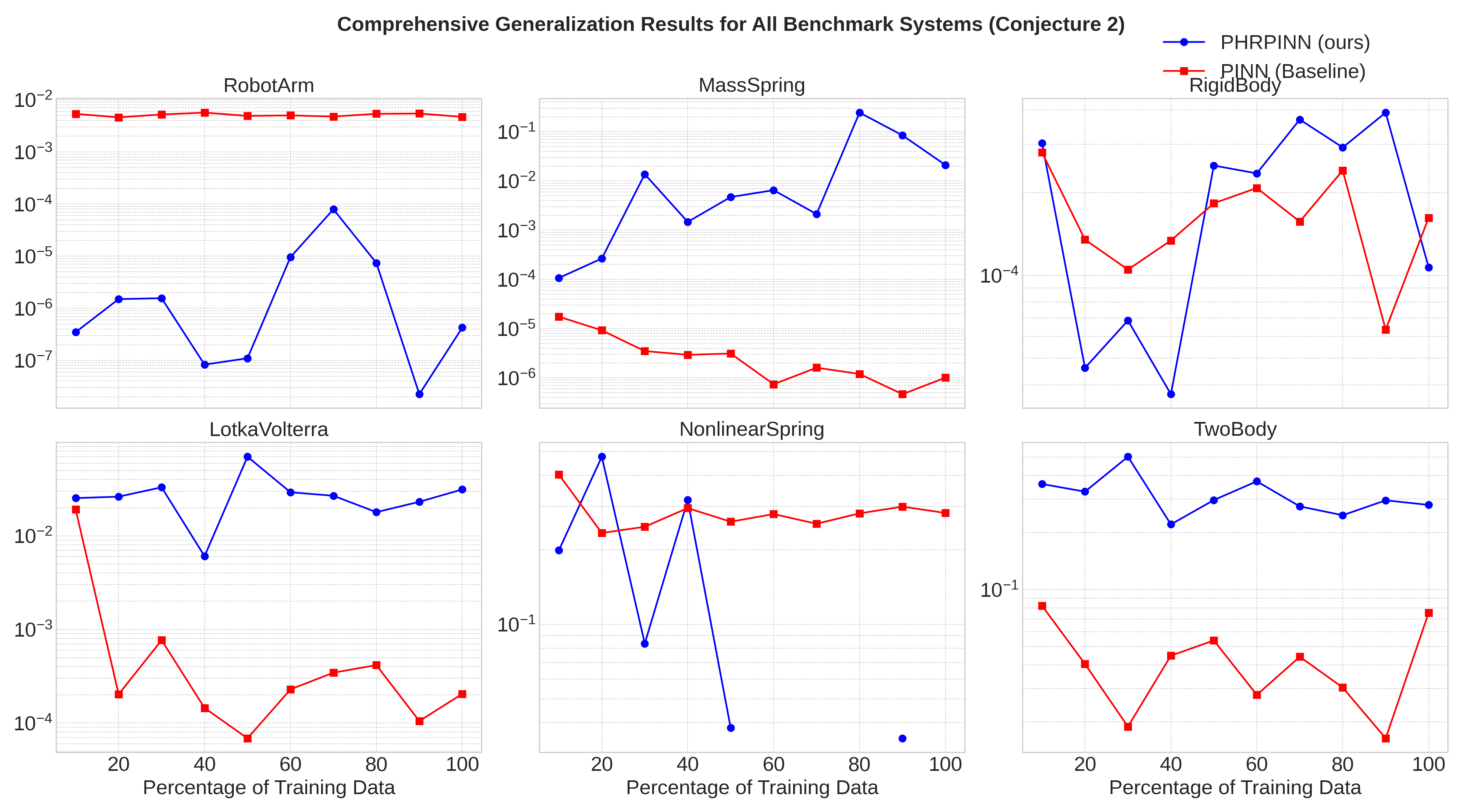}
\caption{Comprehensive generalization learning curves for all six benchmark systems, supporting the discussion for Conjecture 2. Each subplot compares the Mean Squared Error (log scale) of PHRPINN (blue) and PINN (red) as the percentage of available training data increases. These results highlight the system-dependent nature of the generalization advantage conferred by PHRPINN's strong inductive bias, showing strong outperformance on "RobotArm" but underperformance on simpler systems like "MassSpring".}
\label{fig:appendix_generalization_all}
\end{figure*}

\section{Derivations for the Differentiable Projection Operator}
\label{app:Bprojection_derivation}

This appendix provides the detailed mathematical derivations for the projection operator $\Pi$ and its Jacobian, which are summarized in Section~\ref{sec:method_overview}.

\subsection{Derivation of the KKT Sensitivity System (Eq. 6)}

The projection $\mathbf{x}^* = \Pi(\tilde{\mathbf{x}})$ is the solution to the optimization problem:
\begin{equation}
 \mathbf{x}^* = \arg\min_{\mathbf{x}} \frac{1}{2} ||\mathbf{x} - \tilde{\mathbf{x}}||^2 \quad \text{s.t.} \quad \mathbf{g}(\mathbf{x}) = \mathbf{0}.
\end{equation}
The Karush-Kuhn-Tucker (KKT) conditions (our Eq. \ref{eq:phrpinn-kkt}) form a root-finding problem $F(\mathbf{x}^*, \boldsymbol{\lambda}, \tilde{\mathbf{x}}) = \mathbf{0}$:
\begin{equation} \label{eq:app_kkt_conditions}
 F(\mathbf{x}^*, \boldsymbol{\lambda}, \tilde{\mathbf{x}}) = \begin{cases} \mathbf{x}^* - \tilde{\mathbf{x}} + G(\mathbf{x}^*)^\top \boldsymbol{\lambda} = \mathbf{0} \\ \mathbf{g}(\mathbf{x}^*) = \mathbf{0} \end{cases}
\end{equation}
where $G(\mathbf{x}) = \partial \mathbf{g} / \partial \mathbf{x}$ is the constraint Jacobian.

To find the sensitivities ($d\mathbf{x}^*, d\boldsymbol{\lambda}$) with respect to a perturbation $d\tilde{\mathbf{x}}$, we apply the implicit function theorem, as detailed in foundational texts like Nocedal \& Wright \cite{nocedal2006numerical}. The theorem states that if the Jacobian of $F$ with respect to $(\mathbf{x}^*, \boldsymbol{\lambda})$ is non-singular, then:
\begin{equation*}
 \frac{\partial F}{\partial (\mathbf{x}^*, \boldsymbol{\lambda})} \begin{bmatrix} d\mathbf{x}^* \\ d\boldsymbol{\lambda} \end{bmatrix} + \frac{\partial F}{\partial \tilde{\mathbf{x}}} d\tilde{\mathbf{x}} = \mathbf{0}.
\end{equation*}
We compute the necessary partial Jacobians from Equation \eqref{eq:app_kkt_conditions}:

\begin{align*}
 \text{Let } F_1 &= \mathbf{x}^* - \tilde{\mathbf{x}} + G(\mathbf{x}^*)^\top \boldsymbol{\lambda} \\
 \text{Let } F_2 &= \mathbf{g}(\mathbf{x}^*) \\
 \text{Let } H_L &= I + \sum_i \lambda_i \nabla^2 g_i(\mathbf{x}^*) \\
 \\
 \frac{\partial F}{\partial (\mathbf{x}^*, \boldsymbol{\lambda})} &= \begin{bmatrix}
 \frac{\partial F_1}{\partial \mathbf{x}^*} & \frac{\partial F_1}{\partial \boldsymbol{\lambda}} \\
 \frac{\partial F_2}{\partial \mathbf{x}^*} & \frac{\partial F_2}{\partial \boldsymbol{\lambda}}
 \end{bmatrix}
 = \begin{bmatrix}
 H_L & G(\mathbf{x}^*)^\top \\
 G(\mathbf{x}^*) & 0
 \end{bmatrix} \\
 \\
 \frac{\partial F}{\partial \tilde{\mathbf{x}}} &= \begin{bmatrix} \frac{\partial F_1}{\partial \tilde{\mathbf{x}}} \\ \frac{\partial F_2}{\partial \tilde{\mathbf{x}}} \end{bmatrix} = \begin{bmatrix} -I \\ 0 \end{bmatrix}
\end{align*}

By differentiating the optimality conditions $F(\mathbf{x}^*, \boldsymbol{\lambda}, \tilde{\mathbf{x}}) = \mathbf{0}$ with respect to $\tilde{\mathbf{x}}$ and rearranging terms, we obtain the full KKT sensitivity system:
\begin{equation}
 \begin{bmatrix}
 I + \sum_i \lambda_i \nabla^2 g_i(\mathbf{x}^*) & G(\mathbf{x}^*)^\top \\
 G(\mathbf{x}^*) & 0
 \end{bmatrix}
 \begin{bmatrix} d\mathbf{x}^* \\ d\boldsymbol{\lambda} \end{bmatrix}
 \;=\;
 \begin{bmatrix} d\tilde{\mathbf{x}} \\ 0 \end{bmatrix}.
\label{eq:app_kkt_linearized}
\end{equation}
This linear system defines the "robust" Jacobian $J_\Pi^{\mathrm{exact}}$ of the projection (specifically, $J_\Pi^{\mathrm{exact}} = \frac{d\mathbf{x}^*}{d\tilde{\mathbf{x}}}$ is the top-left block of the solution to this system).

\begin{lemma}[Differentiability of the orthogonal projection]
\label{lem:app_proj_diff}
Assume A6 (LICQ) and A6$'$ (SOSC / local non-degeneracy) hold at the orthogonal projection $\mathbf{x}^*$ of $\tilde{\mathbf{x}}$. Then the bordered KKT Jacobian in \eqref{eq:app_kkt_linearized} is nonsingular at $(\mathbf{x}^*,\boldsymbol{\lambda}^*)$. Consequently, by the implicit function theorem the projection
mapping $\Pi:\tilde{\mathbf{x}} \mapsto \mathbf{x}^*$ is $C^1$ in a neighborhood of $\tilde{\mathbf{x}}$, and its Jacobian
is given by the solution operator of the linear system \eqref{eq:app_kkt_linearized}.
\end{lemma}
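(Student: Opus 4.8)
\textit{Proof proposal.} The plan is to prove nonsingularity of the bordered KKT matrix by a standard null-space argument combining LICQ and SOSC, and then invoke the implicit function theorem on the KKT root-finding map $F$ of \eqref{eq:app_kkt_conditions}. Write
\[
K \;=\; \begin{bmatrix} H_L & G(\mathbf{x}^\star)^\top \\ G(\mathbf{x}^\star) & 0 \end{bmatrix}, \qquad H_L \;=\; I + \sum_i \lambda_i^\star \nabla^2 g_i(\mathbf{x}^\star),
\]
which is exactly the partial Jacobian $\partial F/\partial(\mathbf{x}^\star,\boldsymbol{\lambda})$ appearing on the left of \eqref{eq:app_kkt_linearized}; abbreviate $G=G(\mathbf{x}^\star)$.

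First I would show $\ker K = \{\mathbf{0}\}$. Suppose $K(u,v)^\top = \mathbf{0}$, i.e.\ $H_L u + G^\top v = \mathbf{0}$ and $G u = \mathbf{0}$. The second equation says $u \in \ker G$, i.e.\ $u$ lies in the tangent space at $\mathbf{x}^\star$. Left-multiplying the first equation by $u^\top$ and using $(Gu)^\top v = 0$ gives $u^\top H_L u = 0$; by the SOSC hypothesis A6$'$ the quadratic form is strictly positive on $\ker G\setminus\{\mathbf{0}\}$, so $u = \mathbf{0}$. The first equation then reduces to $G^\top v = \mathbf{0}$, and since LICQ (A6) makes $G$ full row rank, $G^\top$ has trivial kernel, hence $v = \mathbf{0}$. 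Thus $K$ is invertible, establishing the first claim of the lemma.

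Second, I would apply the implicit function theorem. By Assumption A2, $\mathbf{g}\in C^2$, so $G = \partial\mathbf{g}/\partial\mathbf{x}$ is $C^1$ and each $\nabla^2 g_i$ is continuous; consequently the map $F(\mathbf{x},\boldsymbol{\lambda},\tilde{\mathbf{x}})$ defined in \eqref{eq:app_kkt_conditions} is $C^1$ jointly in all arguments. Restricting $\tilde{\mathbf{x}}$ to a tubular neighborhood of $\mathcal{M}$ (within the normal injectivity radius, as in Section~\ref{sec:method_overview}) guarantees a unique closest point $\mathbf{x}^\star$, which under LICQ is a KKT point with multiplier $\boldsymbol{\lambda}^\star$, so $F(\mathbf{x}^\star,\boldsymbol{\lambda}^\star,\tilde{\mathbf{x}}) = \mathbf{0}$ at a base point where $\partial F/\partial(\mathbf{x}^\star,\boldsymbol{\lambda}) = K$ is nonsingular. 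The IFT then yields a $C^1$ local solution branch $\tilde{\mathbf{x}}\mapsto(\mathbf{x}^\star(\tilde{\mathbf{x}}),\boldsymbol{\lambda}^\star(\tilde{\mathbf{x}}))$; in particular $\Pi:\tilde{\mathbf{x}}\mapsto\mathbf{x}^\star$ is $C^1$ near $\tilde{\mathbf{x}}$, and differentiating $F=\mathbf{0}$ produces precisely the linear system \eqref{eq:app_kkt_linearized}, whose top-left block is $J_\Pi^{\mathrm{exact}} = d\mathbf{x}^\star/d\tilde{\mathbf{x}}$.

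The bulk of the argument is elementary linear algebra, and I do not anticipate a deep obstacle. The one place to be careful — and where the hypotheses are actually consumed — is the bookkeeping for the IFT: verifying the $C^1$ regularity of $F$ from A2, exhibiting a genuine KKT solution branch to serve as the base point (this is where local uniqueness of orthogonal projection within the normal injectivity radius is used), and confirming the resulting local map is single-valued so that "the" projection Jacobian is well defined. Each of A2, A6, and A6$'$ enters exactly once, at these three points.
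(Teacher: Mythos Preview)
Your proposal is correct and is precisely the standard argument the paper has in mind: the paper states this lemma without a detailed proof, simply asserting that A6 and A6$'$ make the bordered KKT Jacobian nonsingular and then invoking the implicit function theorem. Your null-space argument (SOSC forces $u=0$ on $\ker G$, then LICQ forces $v=0$) together with the $C^1$ regularity of $F$ from A2 is exactly the textbook justification that fills in what the paper leaves implicit, so there is nothing to add or correct.
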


\subsection{Justification for the Tangent-Space Projector (Eq. 7)}

The "fast" tangent-space projector is derived by making a principled approximation to the full sensitivity system \eqref{eq:app_kkt_linearized}.

\paragraph{Derivation}
We neglect the Hessian term, $H := \sum_i \lambda_i \nabla^2 g_i(\mathbf{x}^*)$. This is a well-founded approximation analogous to the one used in the Gauss-Newton algorithm for nonlinear least squares. It is valid when the constraint curvature (weighted by the multipliers) is small. Setting $H=0$, the sensitivity system \eqref{eq:app_kkt_linearized} simplifies to:
\begin{equation*}
 \begin{bmatrix} I & G(\mathbf{x}^*)^\top \\ G(\mathbf{x}^*) & 0 \end{bmatrix}
 \begin{bmatrix} d \mathbf{x}^* \\ d\boldsymbol{\lambda} \end{bmatrix}
 =
 \begin{bmatrix} d\tilde{\mathbf{x}} \\ 0 \end{bmatrix}
\end{equation*}
This represents two linear equations:
\begin{align}
 d\mathbf{x}^* + G(\mathbf{x}^*)^\top d\boldsymbol{\lambda} &= d\tilde{\mathbf{x}} \label{eq:app_alg_a} \\
 G(\mathbf{x}^*) d\mathbf{x}^* &= \mathbf{0} \label{eq:app_alg_b}
\end{align}
From \eqref{eq:app_alg_a}, we isolate $d\mathbf{x}^*$: $d\mathbf{x}^* = d\tilde{\mathbf{x}} - G(\mathbf{x}^*)^\top d\boldsymbol{\lambda}$. Substituting this into \eqref{eq:app_alg_b}:

\begin{align}
 G(\mathbf{x}^*) (d\tilde{\mathbf{x}} - G(\mathbf{x}^*)^\top d\boldsymbol{\lambda}) = \mathbf{0} \implies \\ G(\mathbf{x}^*) d\tilde{\mathbf{x}} = G(\mathbf{x}^*)G(\mathbf{x}^*)^\top d\boldsymbol{\lambda}
\end{align}

Assuming $G(\mathbf{x}^*)$ has full row rank (our LICQ assumption A6), $G(\mathbf{x}^*)G(\mathbf{x}^*)^\top$ is invertible. We solve for $d\boldsymbol{\lambda}$:
\begin{equation*}
 d\boldsymbol{\lambda} = (G(\mathbf{x}^*) G(\mathbf{x}^*)^\top)^{-1} G(\mathbf{x}^*) d\tilde{\mathbf{x}}
\end{equation*}
Finally, substituting this expression for $d\boldsymbol{\lambda}$ back into the equation for $d\mathbf{x}^*$:
\begin{align*}
 d\mathbf{x}^* &= d\tilde{\mathbf{x}} - G(\mathbf{x}^*)^\top \left( (G(\mathbf{x}^*) G(\mathbf{x}^*)^\top)^{-1} G(\mathbf{x}^*) d\tilde{\mathbf{x}} \right) \\
 &= \left[ I - G(\mathbf{x}^*)^\top (G(\mathbf{x}^*) G(\mathbf{x}^*)^\top)^{-1} G(\mathbf{x}^*) \right] d\tilde{\mathbf{x}}
\end{align*}
This shows that the Jacobian of the approximated projection is $J_\Pi^{\mathrm{tan}} = I - G^\top (GG^\top)^{-1} G$, which is exactly Equation \eqref{eq:tangent_projector}.

\paragraph{Theoretical Validity}
This tangent-space formula is algebraically exact when the constraint function $\mathbf{g}$ is affine (linear). For nonlinear constraints, the error of this approximation is bounded. Treating the Hessian $H$ as a perturbation, if the bordered KKT Jacobian in \eqref{eq:app_kkt_linearized} is invertible with minimum singular value $\sigma_{\min}>0$ and $\|H\|_{\mathrm{op}}\le\eta$, then standard matrix perturbation estimates (e.g., Stewart \& Sun \cite{stewart1990matrix}) imply:
\begin{equation*}
 \big\|J_\Pi^{\mathrm{exact}} - J_\Pi^{\mathrm{tan}}\big\| = O\!\left(\frac{\eta}{\sigma_{\min}}\right).
\end{equation*}
This confirms the tangent-space formula is a robust approximation when constraint curvature (weighted by multipliers) is small relative to the KKT system's conditioning.

\section{Proof of Theorem IV.3 (Representational Equivalence)}
\label{app:Ctheoretical_foundations}

This appendix provides the detailed theoretical underpinnings for the representational equivalence between standard Physics-Informed Neural Networks (PINNs) with physics-based regularization and the proposed Hybrid Recurrent PINN (HRPINN) architecture.

\subsection{Preliminaries and Formulations}
Let the underlying physical system be governed by a system of Ordinary Differential Equations (ODEs):
\begin{equation}
\label{eq:appendix_true_ode_system}
\dot{\mathbf{x}}(t) = \mathbf{F}(\mathbf{x}(t), t) = \mathbf{f}_{\text{phys}}(\mathbf{x}(t), t) + \mathbf{f}_{\mathrm{unk}}(\mathbf{x}(t), t), \quad \mathbf{x}(0) = \mathbf{x}_0,
\end{equation}
where $t \in [0,T]$, $\mathbf{x}(t) \in \mathbb{R}^n$ is the true state vector, $\mathbf{f}_{\text{phys}}$ is the known part of the dynamics, and $\mathbf{f}_{\mathrm{unk}}$ is the unknown part. We denote the true solution as $\mathbf{x}^*(t)$.

\paragraph{Standard PINN Formulation}
A standard PINN approximates the true solution $\mathbf{x}^*(t)$ with a neural network $\hat{\mathbf{x}}_{\boldsymbol{\theta}}(t)$ and the unknown physics with another network $\hat{\mathbf{f}}_{\text{unk}, \boldsymbol{\psi}}(\hat{\mathbf{x}}_{\boldsymbol{\theta}}(t), t)$. The physics-informed residual is:
\begin{equation}
\label{eq:appendix_std_pinn_residual}
\mathcal{N}[\hat{\mathbf{x}}_{\boldsymbol{\theta}}, \hat{\mathbf{f}}_{\text{unk}, \boldsymbol{\psi}}](t) \equiv \dot{\hat{\mathbf{x}}}_{\boldsymbol{\theta}}(t) - \mathbf{f}_{\text{phys}}(\hat{\mathbf{x}}_{\boldsymbol{\theta}}(t), t) - \hat{\mathbf{f}}_{\text{unk}, \boldsymbol{\psi}}(\hat{\mathbf{x}}_{\boldsymbol{\theta}}(t), t).
\end{equation}
The training loss, $\mathcal{L}_{\text{PINN}}$, combines a data-fitting term with a penalty on this residual.

\paragraph{HRPINN Formulation}
The HRPINN discretizes Eq. \eqref{eq:appendix_true_ode_system} using a time step $\Delta t$. Its state $\mathbf{h}_k \in \mathbb{R}^n$ at time $t_k = k\Delta t$ approximates $\mathbf{x}^*(t_k)$. The unknown component is learned by a neural network $\hat{\mathbf{f}}_{\boldsymbol{\phi}}(\mathbf{h}_k, t_k)$. The state update rule (using Forward Euler for this analysis) is:
\begin{equation}
\label{eq:appendix_hrpinn_update}
\mathbf{h}_{k+1} = \mathbf{h}_k + \Delta t \left( \mathbf{f}_{\text{phys}}(\mathbf{h}_k, t_k) + \hat{\mathbf{f}}_{\boldsymbol{\phi}}(\mathbf{h}_k, t_k) \right).
\end{equation}
The loss, $\mathcal{L}_{\text{HRPINN}}$, compares the unrolled trajectory $\{\mathbf{h}_k\}$ to observed data.

\subsection{Proof of Representational Equivalence}
The proof relies on the assumptions A1-A6 stated in the main text, which guarantee well-posedness, regularity, and the applicability of the Universal Approximation Theorem (UAT). We restate the theorem for clarity.

\begin{theorem*}[Representational Equivalence]
(Restated from Theorem~\ref{thm:equivalence})
\end{theorem*}

\begin{proof}
The proof uses the properties of the numerical integrator and the UAT. For this analysis, we use the Forward Euler method, whose Local Truncation Error (LTE) for the true solution $\mathbf{x}^*(t)$ is key. By Taylor's theorem, the true solution satisfies:
\[
\mathbf{x}^*(t_{k+1}) = \mathbf{x}^*(t_k) + \Delta t \cdot \mathbf{F}(\mathbf{x}^*(t_k),t_k) + \mathbf{LTE}_{k+1},
\]
where $\|\mathbf{LTE}_{k+1}\| \le \frac{\Delta t^2}{2} M_2$ for some constant $M_2$ bounding the second derivative of $\mathbf{x}^*(t)$.

\paragraph{Part 1: (PINN $\Rightarrow$ HRPINN)}
The premise is that a standard PINN, by the UAT, can learn networks $\hat{\mathbf{x}}_{\boldsymbol{\theta}}$ and $\hat{\mathbf{f}}_{\text{unk}, \boldsymbol{\psi}}$ that effectively identify the true unknown physics $\mathbf{f}_{\mathrm{unk}}$ along the true trajectory.

For the HRPINN, its task is to learn this same target function $\mathbf{f}_{\mathrm{unk}}(\mathbf{x}^*(t),t)$ with its network $\hat{\mathbf{f}}_{\boldsymbol{\phi}}$. The true trajectory defines a compact domain, so the UAT guarantees that for any $\epsilon_f > 0$, there exists an HRPINN network $\hat{\mathbf{f}}_{\boldsymbol{\phi}^*}$ such that $\|\hat{\mathbf{f}}_{\boldsymbol{\phi}^*}(\mathbf{x}^*(t), t) - \mathbf{f}_{\mathrm{unk}}(\mathbf{x}^*(t), t)\|_{L^\infty} < \epsilon_f$.

Let the HRPINN's trajectory error be $\mathbf{e}_k = \mathbf{h}_k - \mathbf{x}^*(t_k)$. We assume $\mathbf{e}_0 = \mathbf{0}$. Subtracting the true evolution equation from the HRPINN update rule \eqref{eq:appendix_hrpinn_update} yields an error recursion. Taking norms and applying the Lipschitz continuity of the dynamics (Assumption A3, with constant $L$) and the network approximation error $\epsilon_f$, we get:
\[
\|\mathbf{e}_{k+1}\| \le \|\mathbf{e}_k\|(1 + L\Delta t) + \Delta t \epsilon_f + \|\mathbf{LTE}_{k+1}\|.
\]
Substituting the bound for the LTE, we have:
\[
\|\mathbf{e}_{k+1}\| \le \|\mathbf{e}_k\|(1 + L\Delta t) + \Delta t \epsilon_f + C_{\mathrm{LTE}}\Delta t^{p+1},
\]
where we use the general order $p$ from Assumption A5 ($p=1$ for Forward Euler). Applying the discrete Grönwall inequality yields the global error bound for the HRPINN trajectory:
\[
\max_{k\le N}\|\mathbf{e}_k\| \;\le\; \frac{e^{LT}-1}{L}\,\big(\epsilon_f + C_{\mathrm{LTE}}\,\Delta t^{p}\big).
\]
This shows the HRPINN's trajectory converges to the true solution as the network approximation error $\epsilon_f \to 0$ and the time step $\Delta t \to 0$.

\paragraph{Part 2: (HRPINN $\Rightarrow$ PINN)}
Assume an HRPINN is trained such that its discrete trajectory $\{\mathbf{h}_k\}_{k=0}^N$ (produced by an order-$p$ integrator) approximates the true solution samples. The challenge is to show that a standard PINN can represent this solution, which requires constructing a continuous interpolant $\tilde{\mathbf{x}}(t)$ that also has a well-behaved derivative.

To do this, we construct a $C^1$ cubic Hermite interpolant $\tilde{\mathbf{x}}(t)$ over the intervals $[t_k, t_{k+1}]$. This interpolant is defined to match both the discrete states $\mathbf{h}_k, \mathbf{h}_{k+1}$ and suitable discrete derivative estimates (e.g., from the integrator's internal stages).

Standard interpolation error estimates (see, e.g., Hairer et al. \cite{hairer1993solving}) imply that for sufficiently smooth trajectories, this interpolant's error is bounded for both the value and its derivative:
\[
\|\tilde{\mathbf{x}}(t) - \mathbf{x}^*(t)\| = O(\Delta t^{p+1}),\qquad\|\dot{\tilde{\mathbf{x}}}(t) - \dot{\mathbf{x}}^*(t)\| = O(\Delta t^{p}),
\]
uniformly on $[0,T]$. We can now define two target functions for the standard PINN networks:
\begin{enumerate}
    \item \textbf{The Solution Target:} The continuous interpolant $\tilde{\mathbf{x}}(t)$ itself.
    \item \textbf{The Physics Target:} The residual function $r(t) \equiv \dot{\tilde{\mathbf{x}}}(t) - \mathbf{f}_{\mathrm{phys}}(\tilde{\mathbf{x}}(t), t)$.
\end{enumerate}
The error bounds above guarantee that $\tilde{\mathbf{x}}(t) \to \mathbf{x}^*(t)$ and $r(t) \to \mathbf{f}_{\mathrm{unk}}(\mathbf{x}^*(t), t)$ as $\Delta t \to 0$.

By the Universal Approximation Theorem (UAT), there exists:
\begin{itemize}
    \item A PINN network $\hat{\mathbf{x}}_{\boldsymbol{\theta}}$ that can approximate $\tilde{\mathbf{x}}(t)$ to arbitrary accuracy $\epsilon_x$.
    \item A PINN network $\hat{\mathbf{f}}_{\text{unk}, \boldsymbol{\psi}}$ that can approximate the continuous residual $r(t)$ to arbitrary accuracy $\epsilon_f$.
\end{itemize}
Therefore, the PINN's physics residual from Eq. \eqref{eq:appendix_std_pinn_residual} can be made arbitrarily small by decreasing the integrator error ($\Delta t$), the HRPINN network error, and the UAT approximation errors ($\epsilon_x, \epsilon_f$), demonstrating representational equivalence.

\end{proof}

\subsection{Discussion of Equivalence}
This theorem establishes that both frameworks possess the fundamental theoretical \textbf{capacity} to represent the same underlying ODE systems. It does not imply they will have similar training dynamics, practical performance on finite data, or ease of optimization. The choice between them is a practical one. HRPINN's structure provides a hard constraint for the known physics and the integrator, which can be advantageous for stability and interpretability. The standard PINN offers greater flexibility in defining residuals for diverse equation types (e.g., PDEs) directly in the loss function.

\vskip 0.2in
\bibliography{ref}

\end{document}